\documentclass[twoside]{article}

\usepackage[accepted]{aistats2026}
\usepackage{array}

\usepackage[round]{natbib}

\usepackage{fontawesome5}
\usepackage{hyperref}

\usepackage[linesnumbered,norelsize,ruled]{algorithm2e}
\usepackage{amsmath}
\usepackage{amssymb}
\usepackage{amsthm}
\usepackage{bm}
\usepackage{booktabs}
\usepackage[font=footnotesize,labelfont=bf]{caption}
\usepackage[capitalize,nameinlink]{cleveref}
\usepackage{color}
\usepackage{dsfont}
\usepackage[T1]{fontenc}
\usepackage{listings}
\usepackage{makecell}
\usepackage{mathtools}
\usepackage{mathrsfs}
\usepackage[xcolor,framemethod=tikz]{mdframed}
\usepackage{multirow}
\usepackage{pifont}
\usepackage{rotating}
\usepackage[group-separator={,},group-minimum-digits={3}]{siunitx}
\usepackage{subcaption}
\usepackage{tabularx}
\usepackage{tikz}
\usepackage{wrapfig}
\usepackage{xcolor}

\usetikzlibrary{arrows.meta}
\usetikzlibrary{calc}
\usetikzlibrary{positioning}

\mdfdefinestyle{myframe}{
  linecolor=blue!30,
  linewidth=1pt,
  backgroundcolor=blue!5,
  innertopmargin=.5\baselineskip,
  innerleftmargin=0.8em,
  innerrightmargin=0.8em,
  innerbottommargin=.5\baselineskip,
  frametitlebackgroundcolor=blue!30,
}

\usepackage{colortbl}
\usepackage{diagbox}
\newcolumntype{C}{>{\columncolor[gray]{0.85}}c}

\hypersetup{
  colorlinks=true,
  linkcolor={red!15!green!35!blue!95},
  citecolor={green!50!black},
  urlcolor={red!50!black}
}

\DontPrintSemicolon
\SetArgSty{textnormal}
\SetKwInOut{Input}{Input}
\SetKwInOut{Output}{Output}
\SetKwComment{Comment}{$\triangleright$\ }{}
\IncMargin{1em}  %

\renewcommand{\tilde}{\widetilde}
\renewcommand{\hat}{\widehat}

\newcommand{\Bcal}{\mathcal{B}}

\newcommand{\Pcal}{\mathcal{P}}

\newcommand{\Ucal}{\mathcal{U}}

\newcommand{\Xcal}{\mathcal{X}}

\newcommand{\Ebb}{\mathbb{E}}

\newcommand{\Lbb}{\mathbb{L}}

\newcommand{\Nbb}{\mathbb{N}}

\newcommand{\Rbb}{\mathbb{R}}

\newcommand{\Cbf}{\mathbf{C}}

\newcommand{\Ibf}{\mathbf{I}}

\newcommand{\Pbf}{\mathbf{P}}

\newcommand{\Sbf}{\mathbf{S}}

\newcommand{\Ubf}{\mathbf{U}}

\newcommand{\Wbf}{\mathbf{W}}

\newcommand{\Ybf}{\mathbf{Y}}

\newcommand{\bbf}{\mathbf{b}}

\newcommand{\fbf}{\mathbf{f}}
\newcommand{\gbf}{\mathbf{g}}

\newcommand{\pbf}{\mathbf{p}}
\newcommand{\qbf}{\mathbf{q}}

\newcommand{\ubf}{\mathbf{u}}
\newcommand{\vbf}{\mathbf{v}}
\newcommand{\wbf}{\mathbf{w}}
\newcommand{\xbf}{\mathbf{x}}
\newcommand{\ybf}{\mathbf{y}}
\newcommand{\zbf}{\mathbf{z}}

\newcommand{\oneds}{\mathds{1}}

\newcommand{\defeq}{\coloneqq}

\renewcommand{\subset}{\subseteq}
\renewcommand{\supset}{\supseteq}

\DeclareMathOperator*{\E}{\Ebb}

\DeclareMathOperator*{\argmin}{arg\,min}

\DeclareMathOperator{\supp}{\mathrm{supp}}

\DeclareMathOperator{\convhull}{\mathrm{conv}}

\newcommand{\set}[1]{\left\lbrace{#1}\right\rbrace}
\newcommand{\setcomp}[2]{\left\lbrace{#1} \hspace{0.02em}\relmiddle{|}\hspace{0.02em} {#2}\right\rbrace}
\newcommand{\inpr}[2]{\left\langle{#1},{#2}\right\rangle}
\newcommand{\relmiddle}[1]{\mathrel{}\middle#1\mathrel{}}

\newcommand{\rd}{\,\mathrm{d}}

\renewcommand{\epsilon}{\varepsilon}
\newcommand{\frob}{{\mathrm{F}}}

\theoremstyle{plain}
\newtheorem{theorem}{Theorem}
\newtheorem{definition}{Definition}

\newtheorem{proposition}{Proposition}

\theoremstyle{definition}

\definecolor{codegreen}{rgb}{0,0.6,0}
\definecolor{codegray}{rgb}{0.5,0.5,0.5}
\definecolor{codepurple}{rgb}{0.58,0,0.82}
\definecolor{backcolour}{rgb}{0.95,0.95,0.92}

\lstdefinestyle{mylst}{
    backgroundcolor=\color{backcolour},   
    commentstyle=\color{codegreen},
    keywordstyle=\color{magenta},
    numberstyle=\tiny\color{codegray},
    stringstyle=\color{codepurple},
    basicstyle=\ttfamily\footnotesize,
    breakatwhitespace=false,         
    breaklines=true,                 
    captionpos=b,                    
    keepspaces=true,                 
    numbers=left,
    numbersep=5pt,
    xleftmargin=15pt,
    showspaces=false,                
    showstringspaces=false,
    showtabs=false,                  
    tabsize=2,
    frame=tb,
}

\newcommand{\method}[1]{\textsf{#1}}

\begin{document}

\runningtitle{Brenier Isotonic Regression}
\runningauthor{Bao, Eshraghi, Wang}

\twocolumn[

\aistatstitle{Brenier Isotonic Regression}

\aistatsauthor{ Han Bao${}^1$ \And Amirreza Eshraghi${}^2$ \And Yutong Wang${}^2$ }

\aistatsaddress{ ${}^1$The Institute of Statistical Mathematics \And ${}^2$Illinois Institute of Technology }

]

\begin{abstract}
Isotonic regression (IR) is a shape-constrained regression to maintain a univariate fitting curve non-decreasing, which has numerous applications.
When it comes to multivariate responses, IR is no longer applicable because monotonicity is not readily extendable.
We consider a multi-output regression problem where a regression function is \emph{cyclically monotone}.
Roughly speaking, a cyclically monotone function is the gradient of some convex potential.
Whereas enforcing cyclic monotonicity is apparently challenging, we leverage the fact that Kantorovich's optimal transport (OT) always yields a cyclically monotone coupling.
This naturally allows us to interpret a regression function and the convex potential as a link function in generalized linear models and Brenier's potential in OT, respectively.
We call this IR extension \emph{Brenier isotonic regression}.
We demonstrate applications to probability calibration and single-index models.
\end{abstract}

\section{INTRODUCTION}
\label{section:introduction}
Given a labeled dataset $\set{(z_i,y_i)}_{i\in[n]}$ with an input $z_i\in\Rbb$ and response $y_i\in\Rbb$,
isotonic regression (IR) returns nonparametric estimates $\set{\hat{y}_i}\subset\Rbb$ to solve
\[
\begin{aligned}
  &\min_{\hat y_1,\dots,\hat y_n\in\Rbb} \sum_{i\in[n]}(y_i-\hat y_i)^2 \\
  &\text{subject to} \quad (z_i - z_j)(\hat y_i - \hat y_j) \ge 0 \quad \forall (i,j)\in[n]^2.
\end{aligned}
\]
IR is nonparametric, imposing no regression model but only enforcing monotonicity.
This specific quadratic program can be solved efficiently by the Pool Adjacent Violaters (PAV) algorithm~\citep{Ayer1955},
which achieves the optimal solution of IR provably~\citep{Robertson1988,Best1990}.
IR has been extensively used in probability calibration of binary classifiers~\citep{Zadrozny2002KDD,Niculescu2005ICML} and single-index models~\citep{Kalai2009COLT,Kakade2011NeurIPS}.
Moreover, IR has been recently used for the Bayes error estimation~\citep{Ushio2025}.

When both inputs and responses become multivariate, the extension of IR becomes scarce.
Consider a labeled dataset $\set{(\zbf_i,\ybf_i)}_{i\in[n]}$ with $\zbf_i,\ybf_i\in\Rbb^d$,
then what is a natural extension of univariate monotonicity?

Such a multi-output ``IR'' is essential in probability calibration.
In binary classification, a covariate $\xbf_i$ is classified based on a univariate probability $\hat p_i\in[0,1]$ representing the confidence of a binary label $y_i\in\set{0,1}$.
IR applied to $\set{(\hat p_i,y_i)}_{i\in[n]}$ typically ameliorates the quality of the confidence.
Yet, multiclass classification deals with multinomial probability $\hat{\pbf}_i\in\triangle^{d-1}$ representing the confidence of a one-hot label $\ybf_i\in\set{0,1}^d$,
where $\triangle^{d-1}$ is the probability simplex.
Since IR is limited to 1D responses, the one-vs-rest (OvR) formulation has been applied in this case~\citep{Zadrozny2002KDD,Niculescu2005ICML}.
The OvR formulation requires an additional normalization step after learning a regressor for each class, and what is worse, each regressor is independent without taking care of class correlation~\citep{Arad2025ICML}.

To extend IR to the multi-class case, let us first consider generalized linear models (GLMs) for multinomial outputs~\citep{Agarwal2014ICML,Lam2023ICML}:
\begin{equation}
  \label{equation:glm}
  \E[Y|X=\xbf] = \nabla\Phi(\Wbf^*\xbf) := \varphi(\Wbf^*\xbf),
\end{equation}
where $\Wbf^*\in\Rbb^{d\times D}$ is the weight matrix, $\Phi: \Rbb^d \to \Rbb$ is a proper convex lower-semicontinuous (l.s.c.) function, $\varphi:\Rbb^d\to\Rbb^d$ is the inverse link function, and $Y\in\set{0,1}^d$ is a one-hot class vector.
The choice $\varphi=\nabla\Phi$ is standard in literature of canonical proper losses~\citep{Buja2005,Gneiting2007JASA,Williamson2016JMLR,Bao2024}.
In view of Fenchel--Young losses~\citep{Blondel2020JMLR,Bao2021AISTATS}, the convex potential $\Phi$ can be regarded as a generalized log-partition function or free energy.
The virtue of multinomial GLMs~\eqref{equation:glm} is rooted in the following fact:
$\Phi$ is proper l.s.c. convex if and only if $\varphi$ has a \emph{cyclically monotone} graph~\citep{Rockafellar1966}.
Therefore, we consider regression with an unknown but cyclically monotone link, which we call \emph{cyclically monotonic isotonic regression (CMIR)}:
\begin{equation}
  \label{equation:cm_ir}
  \begin{aligned}
    & \min_{\hat{\ybf}_1,\dots,\hat{\ybf}_n \in \triangle^{d-1}} \sum_{i\in[n]} \|\ybf_i - \hat{\ybf}_i\|_2^2 \quad \text{subject to} \\
    & \text{$\hat{\ybf}_i = \varphi(\zbf_i)$ for some cyclically monotone $\varphi$}.
  \end{aligned}
\end{equation}

We propose a \emph{nonparametric} solution to CMIR~\eqref{equation:cm_ir}.
Thus far, \citet{Agarwal2014ICML} models the inverse link $\varphi$ with a finite number of bases, and \citet{Lam2023ICML} models the convex potential $\Phi$ with input-convex neural networks~\citep{Amos2017ICML},
but both are \emph{parametric} and may be subject to the approximation error.
To nonparametrically solve CMIR~\eqref{equation:cm_ir}, we leverage an elegant connection between convexity and \emph{optimal transport (OT)}.
Specifically, the celebrated Brenier's theorem states that an optimal transport map can be represented by the gradient of a convex potential~\citep{Brenier1991}.
Thus, we can consider first transporting $\set{\zbf_i}$ to $\set{\hat{\ybf}_i}$ and then minimizing the regression error---%
reformulating CMIR into a bi-level optimization with OT being the inner problem.
Inspired by Brenier's theorem, we call our approach \emph{Brenier isotonic regression} (\method{BrenierIR}).
We implement this bi-level program with \texttt{scipy} in an end-to-end fashion, which estimates the implicit gradient with the finite difference method~\citep{2020SciPy-NMeth}.

Experimentally, we test \method{BrenierIR} on calibration and single-index model tasks.
In particular, it performs surprisingly well and stably on the calibration task.
\method{BrenierIR} is more principled with the connection to GLMs, and does not require complicated hyperparameter tuning.
Hence, we suggest practitioners to test \method{BrenierIR} when calibrating multiclass classifiers.

\subsection{Additional related work}
\citet{Han2019AOS} and \citet{Fang2021AOS} considered IR with multivariate covariates, but the response remains real-valued and coordinate-wise monotonicity is considered.
\citet{Sasabuchi1983} considered multi-output IR, but again coordinate-wise monotonicity is considered.
The main drawback of coordinate-wise monotonicity
is that it does not capture GLMs. Even in the simplest case of multinomial logistic regression, the softmax is not coordinate-wise monotone but \emph{is}
cyclically monotone~\citep{Gao2017}.

The connection between monotonicity and OT has been the basis of conditional vector quantile~\citep{Carlier2016AOS,Chernozhukov2017}.
In particular, vector quantile estimation has been actively studied in terms of scalability~\citep{Rosenberg2023ICLR}, statistical property~\citep{Hallin2021AOS}, and neural network modeling~\citep{Kondratyev2025}.
While they focus on \emph{estimating} conditional vector quantiles, we optimize the regression error by \emph{fitting} vector quantiles.
To our knowledge, \citet{Dawkins2022AAAI} is the only previous work focusing on this monotone fitting problem with an application in economics, while they mainly focus on the PAC learnability analysis.
The connection to OT is absent in their work, and establishing this connection is a central contribution of ours.

The connection between OT and regression might be apparently elusive because OT deals with two ``uncoupled'' sets but regression cares about ``coupled'' in-out relations.
In fact, OT has been applied only to uncoupled isotonic regression~\citep{Rigollet2019,Slawski2024JMLR}---%
the unused result \citet[Proposition~2.3]{Rigollet2019} interestingly demonstrates that the pushforward is an isometry between the $L_p$- and $p$-Wasserstein distances.
\citet{Paty2020AISTATS} implied a connection between (coupled) isotonic regression and OT but only in 1D case.
\citet{Berta2024AISTATS} extended IR to the multiclass case by preserving AUC.
However, the multiclass extension of AUC lacks a canonical definition, which limits the approach.
Thus, our work fills the missing piece in these lines of work.

\section{BACKGROUND}
\label{section:background}

\paragraph*{Notation.}
$\mathscr{P}(\Rbb^d)$ denotes the set of Borel probability measures.
The support of $\mu\in\mathscr{P}(\Rbb^d)$ is denoted by $\supp(\mu)\defeq\setcomp{\zbf\in\Rbb^d}{\mu(\zbf)>0}$.
We write the Dirac measure by $\delta_\bullet$.
We write the all-ones vector by $\oneds_n\in\Rbb^n$.
The probability simplex is denoted by $\triangle^{d-1}\defeq\setcomp{\pbf\in\Rbb^d_{\ge0}}{\inpr{\pbf}{\oneds_d}=1}$.
The set of permutations over $[n]$ is written by $\mathrm{Perm}(n)$.
The Lebesgue space $L^p(\mu)$ is the space of measurable functions whose $p$-th moment with respect to a measure $\mu$ is finite.

\paragraph*{Optimal transport.}
Given a cost $c:\Rbb^d\times\Rbb^d\to\Rbb$ and probability measures $\mu,\nu\in\mathscr{P}(\Rbb^d)$,
the Monge problem is an OT problem formulated as follows:
\begin{equation}
  \label{equation:monge}
  \inf_{T} \setcomp{ \int_{\Rbb^d} c(\zbf,T(\zbf))\, \rd\mu(\zbf) }
  { T_\sharp\mu = \nu },
\end{equation}
where $T:\Rbb^d\to\Rbb^d$ is a measurable map and $T_\sharp\mu$ denotes the pushforward of $\mu$ by $T$,
defined as $(T_\sharp\mu)(A) = \mu(T^{-1}(A))$ for every Borel $A\subseteq\Rbb^d$.
If $T^*$ attains the infimum, $T^*$ is called the optimal transport map.

We typically relax the challenging constraint in the Monge problem by the following Kantorovich problem:
\begin{equation}
  \label{equation:kantorovich}
  \begin{aligned}
    & \inf_\pi \setcomp{ \int_{\Rbb^d \times \Rbb^d} 
      c(\zbf,\ubf) \, \rd\pi(\zbf,\ubf)  }
      {\pi \in \Ucal(\mu,\nu) }, \\
    & \text{where} \quad 
      \Ucal(\mu,\nu) \defeq 
      \setcomp{ \pi }{ \Pi_{1\sharp}\pi = \mu,\; \Pi_{2\sharp}\pi = \nu }.
  \end{aligned}
\end{equation}

and $\Pi_{1\sharp}$ and $\Pi_{2\sharp}$ are the pushforward by the projections $\Pi_1(\zbf,\ubf)=\zbf$ and $\Pi_2(\zbf,\ubf)=\ubf$, respectively.
This is an infinite-dimensional linear program over $\mathscr{P}(\Rbb^d\times\Rbb^d)$,
and the solution $\pi_*$ is called the optimal coupling of $\mu$ and $\nu$ (whenever it exists).

The relationship between the Monge and Kantorovich problems is clear for the discrete case.
Consider the discrete measures $\mu=\frac1n\sum_{i\in[n]}\delta_{\zbf_i}$ and $\nu=\frac1n\sum_{j\in[n]}\delta_{\ubf_j}$.
Let $\Cbf\in\Rbb^{n\times n}$ be the cost defined by $C_{ij}\defeq c(\zbf_i,\ubf_j)$.
The discrete Monge problem is
\begin{equation}
  \label{equation:discrete_monge}
  \min_{\sigma\in\mathrm{Perm}(n)}\frac1n\sum_{i\in[n]}C_{i\sigma(i)}.
\end{equation}
Further, let us introduce the (scaled) Birkhoff polytope
\[
  \Bcal(m,n)\!\defeq\!\setcomp{\Pbf\in\Rbb_{\ge0}^{m\times n}\!}{\!m\Pbf\oneds_n = \oneds_m, n\Pbf^\top\oneds_m = \oneds_n}\!.
\]
The discrete Kantorovich problem is
\begin{equation}
  \label{equation:discrete_kantorovich}
  \min_{\Pbf\in\Bcal(n,n)}\inpr{\Cbf}{\Pbf},
\end{equation}
where $\inpr{\Cbf}{\Pbf}\defeq\sum_{i,j}C_{ij}P_{ij}$ is the transport cost.
The polytope $\Bcal(n,n)$ enforces $n\Pbf$ to be doubly stochastic.
The following well-known/celebrated proposition states that an optimizer for the Kantorovich problem~\eqref{equation:discrete_kantorovich} is a permutation matrix.
For a permutation $\sigma\in\mathrm{Perm}(n)$, we write $\Pbf^\sigma$ for the permutation matrix:
\[
  \forall (i,j)\in[n]^2, \quad P^\sigma_{ij} = \begin{cases}
    1/n & \text{if $j=\sigma_i$,} \\
    0 & \text{otherwise.}
  \end{cases}
\]
\begin{proposition}[{\citet[Proposition~1.3.1]{Panaretos2020}}]
  \label{proposition:permutation}
  There exists an optimal solution for the discrete Kantorovich problem~\eqref{equation:discrete_kantorovich}, $\Pbf^{\sigma^*}$,
  which is a permutation matrix associated to an optimal permutation $\sigma^*\in\mathrm{Perm}(n)$ to the discrete Monge problem~\eqref{equation:discrete_monge}.
  Moreover, if $\set{\sigma^*_1,\dots,\sigma^*_M}\subset\mathrm{Perm}(n)$ is the set of optimal permutations to the discrete Monge problem~\eqref{equation:discrete_monge},
  the set of optimal solutions to the discrete Kantorovich problem~\eqref{equation:discrete_kantorovich} is the convex hull of $\set{\Pbf^{\sigma_1^*},\dots,\Pbf^{\sigma_M^*}}$.
\end{proposition}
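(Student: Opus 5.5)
We prove the statement with the Birkhoff--von Neumann theorem combined with elementary linear-programming facts. Since $\Bcal(n,n)=\frac1n\cdot\{\text{doubly stochastic matrices}\}$, Birkhoff--von Neumann says that $\Bcal(n,n)$ is a compact convex polytope whose extreme points are exactly the matrices $\Pbf^\sigma$, $\sigma\in\mathrm{Perm}(n)$. To keep the argument self-contained, I would recall the standard proof. For doubly stochastic $n\Pbf$, Hall's marriage theorem applied to the bipartite support graph gives a perfect matching $\sigma$ with $P_{i\sigma(i)}>0$. Subtracting $\lambda\,\Pbf^\sigma$, with $\lambda=n\min_i P_{i\sigma(i)}$, and rescaling strictly reduces the support. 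Induction then writes $\Pbf$ as a convex combination of permutation matrices. Conversely, each $\Pbf^\sigma$ is extreme: it has entries in $\{0,1/n\}$, so any midpoint decomposition inside the nonnegative polytope must share its support and hence coincide with it.

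\emph{Existence.} The objective $\Pbf\mapsto\inpr{\Cbf}{\Pbf}$ is linear and continuous on the nonempty compact set $\Bcal(n,n)$, so a minimizer $\Pbf^\star$ exists. Write $\Pbf^\star=\sum_\sigma\lambda_\sigma\Pbf^\sigma$ with $\lambda$ in the simplex. By linearity,
\[
\inpr{\Cbf}{\Pbf^\star}=\sum_\sigma\lambda_\sigma\inpr{\Cbf}{\Pbf^\sigma}\ge\min_\sigma\inpr{\Cbf}{\Pbf^\sigma}.
\]
Hence some $\Pbf^{\sigma^*}$ is optimal for the Kantorovich problem. Since $\inpr{\Cbf}{\Pbf^\sigma}=\frac1n\sum_i C_{i\sigma(i)}$, the Kantorovich value is at most the Monge value. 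Conversely, Monge feasibility embeds into Kantorovich feasibility via $\sigma\mapsto\Pbf^\sigma$, so the two optimal values coincide. Therefore $\sigma^*$ is optimal for the discrete Monge problem~\eqref{equation:discrete_monge}, and more generally $\Pbf^\sigma$ is Kantorovich-optimal if and only if $\sigma$ is Monge-optimal.

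\emph{Characterization of the optimal set.} Let $v^*$ be the optimal value. Each $\Pbf^{\sigma_k^*}$ is optimal, and the optimal set of a linear program is convex, so $\convhull\{\Pbf^{\sigma_1^*},\dots,\Pbf^{\sigma_M^*}\}$ is contained in the optimal set. For the reverse inclusion, take any optimal $\Pbf$ and decompose it as $\Pbf=\sum_\sigma\lambda_\sigma\Pbf^\sigma$. Then
\[
0=\inpr{\Cbf}{\Pbf}-v^*=\sum_\sigma\lambda_\sigma\bigl(\inpr{\Cbf}{\Pbf^\sigma}-v^*\bigr),
\]
and every term in the sum is nonnegative. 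So $\lambda_\sigma>0$ forces $\inpr{\Cbf}{\Pbf^\sigma}=v^*$, which means $\sigma$ is one of the optimal permutations $\sigma_k^*$. Hence $\Pbf$ lies in the stated convex hull.

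The main obstacle is the Birkhoff--von Neumann decomposition, specifically the Hall's-condition step. Given $S\subset[n]$ rows, the total mass $|S|/n$ they carry must be spread over the columns in their support, and each column carries total mass $1/n$. Hence the support has at least $|S|$ columns, so Hall's condition holds. Once this step is in place, the rest is linear-programming bookkeeping.
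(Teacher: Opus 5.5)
Your proof is correct and follows essentially the same route as the paper. The paper cites Panaretos--Zemel and attributes the result to Birkhoff's theorem that the extreme points of $\Bcal(n,n)$ are the permutation matrices $\Pbf^\sigma$; writing every feasible point as a convex combination of the $\Pbf^\sigma$ and using linearity of $\Pbf\mapsto\inpr{\Cbf}{\Pbf}$ then gives both attainment at some $\Pbf^{\sigma^*}$ and the convex-hull description of the optimal set. The only difference is that you prove the Birkhoff--von Neumann decomposition via Hall's theorem rather than citing it. A small wording point: in the value-equality step, ``Kantorovich $\ge$ Monge'' comes from your decomposition display, whereas the embedding $\sigma\mapsto\Pbf^\sigma$ gives only the reverse inequality, which you state twice.
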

Intuitively, \cref{proposition:permutation} says that we can recover a transport map from the optimal coupling.
This exact correspondence holds only when the source and target distributions are supported uniformly on the same number of points.
This stems from the fact that the set of extremal points of the Birkhoff polytope matches the set of permutation matrices~\citep{Birkhoff1946}.

\paragraph*{Cyclic monotonicity.}
We need to recall additional mathematical machinery from convex analysis.
We keep using a generic symmetric cost $c:\Rbb^d\times\Rbb^d\to\Rbb$, though we are mostly interested in the squared $L_2$ cost.
Refer to \citet[Section~5]{Villani2008} for more details.
\begin{definition}
  \label{definition:cm_set}
  A relation $\Gamma\subset\Rbb^d\times\Rbb^d$ is said to be a \emph{$c$-cyclically monotone} if, for any $m\in\Nbb$ and any family $\set{(\zbf_i,\ubf_i)}_{i\in[m]} \subset \Gamma$, the following inequality holds:
  \begin{equation}
    \label{equation:cm_condition}
    \sum_{i=1}^mc(\zbf_i,\ubf_i) \le \sum_{i=1}^mc(\zbf_i,\ubf_{i+1}),
  \end{equation}
  where we use the convention $\ubf_{m+1}=\ubf_1$.
  A coupling $\pi\in\mathscr{P}(\Rbb^d\times\Rbb^d)$ is said to be \emph{$c$-cyclically monotone} if $\supp(\pi)$ is $c$-cyclically monotone.
\end{definition}
Informally, a cyclically monotone coupling is locally optimal in the following sense:
The incurred transport cost (i.e., the left-hand side of \eqref{equation:cm_condition}) must grow if we locally perturb any of the coupled points.
\begin{definition}
  \label{definition:c-convex}
  A function $\Phi:\Rbb^d\to\Rbb\cup\set{\infty}$ is said to be \emph{$c$-convex} if $\setcomp{\zbf}{\Phi(\zbf)<\infty}\ne\emptyset$ and there exists $\zeta:\Rbb^d\to\Rbb\cup\set{\pm\infty}$ such that
  \begin{equation}
    \label{equation:c-convex}
    \forall \zbf\in\Rbb^d, \quad \Phi(\zbf) = \inf_{\ubf\in\Rbb^d}c(\zbf,\ubf) - \zeta(\ubf).
  \end{equation}
  The \emph{$c$-transform} of $\Phi$ defined as
  \[
    \forall \ubf\in\Rbb^d, \quad \Phi^c(\ubf) = \inf_{\zbf\in\Rbb^d} c(\zbf,\ubf) - \Phi(\zbf),
  \]
  and its \emph{$c$-subdifferential} is the following $c$-cyclically monotone relation:
  \[
    \partial_c\Phi \defeq \setcomp{(\zbf,\ubf) \in \Rbb^d\!\times\Rbb^d}{\Phi^c(\ubf) + \Phi(\zbf) = c(\zbf,\ubf)}.
  \]
  The $c$-subdifferential of $\Phi$ at point $\zbf$ is
\[
  \partial_c\Phi(\zbf)
  = \setcomp{ \ubf \in \Rbb^d }{ (\zbf,\ubf) \in \partial_c\Phi }.
\]

\end{definition}
Consider the cost $c_{\|\cdot\|}(\zbf,\ubf)=\|\zbf-\ubf\|_2^2$
and note that $c_{\|\cdot\|}$-cyclic monotonicity is equivalent to cyclic monotonicity with respect to $c_{\inpr{\cdot}{\cdot}}(\zbf,\ubf)\defeq-\inpr{\zbf}{\ubf}$~\citep{Smith1992}.
Then $c_{\inpr{\cdot}{\cdot}}$-subdifferential recover the usual subdifferential.
Thus, below, when we say cyclic monotonicity without specifying the cost, the cost is implicitly assumed to be $c_{\inpr{\cdot}{\cdot}}$.

The following proposition is a fundamental characterization of convex functions.
Therein, a relation $\Gamma$ is said to be \emph{maximally} monotone if no other monotone relation is strictly larger than it.
\begin{proposition}[{\citet[Theorem~3]{Rockafellar1966}}]
  \label{proposition:cm}
  A function $\Phi:\Rbb^d\to\Rbb$ is convex and $\setcomp{\zbf}{\Phi(\zbf)<\infty}\ne\emptyset$ if and only if there exists a maximal cyclically monotone relation $\Gamma\subset\Rbb^d\times\Rbb^d$ such that $\partial\Phi=\Gamma$.
  Moreover, $\Phi$ is unique up to an arbitrary additive constant.
\end{proposition}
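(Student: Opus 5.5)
We prove the two directions of \cref{proposition:cm} separately and then the uniqueness claim. Throughout we read $\Phi$ as proper and lower semicontinuous, as in the surrounding discussion: the characterization fails for non-l.s.c.\ convex functions, whose subdifferential need not be maximal. Here $\partial\Phi$ is the ordinary subdifferential, i.e.\ the $c_{\inpr{\cdot}{\cdot}}$-subdifferential.

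\textbf{Step 1: $\partial\Phi$ is cyclically monotone.} Let $\Phi$ be proper l.s.c.\ convex and take $(\zbf_i,\ubf_i)\in\partial\Phi$ for $i\in[m]$. The subgradient inequality gives
\[
\Phi(\zbf_{i+1})\ge\Phi(\zbf_i)+\inpr{\ubf_i}{\zbf_{i+1}-\zbf_i}.
\]
Summing over $i$ cyclically, the $\Phi$ terms cancel and we get $\sum_i\inpr{\ubf_i}{\zbf_{i+1}-\zbf_i}\le0$. After reindexing, this is exactly \eqref{equation:cm_condition} for the cost $c_{\inpr{\cdot}{\cdot}}$.

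\textbf{Step 2: $\partial\Phi$ is maximal.} Suppose $(\zbf_0,\ubf_0)$ can be added to $\partial\Phi$ while keeping cyclic monotonicity. By Step 1, $\Phi$ is determined by its subdifferential, so we want to show $\ubf_0\in\partial\Phi(\zbf_0)$. We will use Rockafellar's representation
\[
\Phi(\zbf)=\Phi(\zbf_1)+\sup\Bigl\{\inpr{\ubf_k}{\zbf-\zbf_k}+\sum_{i<k}\inpr{\ubf_i}{\zbf_{i+1}-\zbf_i}\Bigr\},
\]
where the supremum runs over finite chains in $\partial\Phi$ starting at a fixed base point $\zbf_1$. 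This identity holds for proper l.s.c.\ convex $\Phi$; for the reverse inequality we need that $\partial\Phi$ is nonempty on a dense subset of the domain (Br{\o}ndsted--Rockafellar) together with lower semicontinuity. Cyclic monotonicity of the enlarged relation lets us insert $(\zbf_0,\ubf_0)$ into any chain. Comparing the enlarged potential with $\Phi$ then yields $\Phi(\zbf)\ge\Phi(\zbf_0)+\inpr{\ubf_0}{\zbf-\zbf_0}$ for all $\zbf$, so $\ubf_0\in\partial\Phi(\zbf_0)$.

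\textbf{Step 3: every maximal cyclically monotone $\Gamma$ is a subdifferential.} Fix $(\zbf_1,\ubf_1)\in\Gamma$ and define $\Phi$ by the chain formula above, with chains taken in $\Gamma$. As a supremum of affine functions, $\Phi$ is convex and l.s.c. Cyclic monotonicity makes the chain sum nonpositive when $\zbf=\zbf_1$, so $\Phi(\zbf_1)=0$ and $\Phi$ is proper. For any $(\zbf,\ubf)\in\Gamma$, appending it to chains shows $\Phi(\zbf')\ge\Phi(\zbf)+\inpr{\ubf}{\zbf'-\zbf}$, hence $\Gamma\subset\partial\Phi$. By Step 1, $\partial\Phi$ is cyclically monotone, so maximality of $\Gamma$ forces $\Gamma=\partial\Phi$.

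\textbf{Step 4: uniqueness up to a constant.} Suppose $\partial\Phi_1=\partial\Phi_2$. Both $\Phi_j$ are locally Lipschitz on the interior of their common domain, and there $\partial\Phi_j(\zbf)$ is a singleton $\{\nabla\Phi_j(\zbf)\}$ almost everywhere. Integrating the common gradient along segments gives $\Phi_1-\Phi_2$ constant on the interior. Lower semicontinuity together with convexity (using the closure of the restriction to the relative interior) extends this to the whole domain. An alternative that avoids differentiability: apply the chain formula of Step 2 to both $\Phi_1$ and $\Phi_2$; the formulas coincide, so the functions differ only by $\Phi_1(\zbf_1)-\Phi_2(\zbf_1)$.

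\textbf{Main obstacle.} The hard part is Step 2, specifically proving that the chain-sum supremum exactly recovers an arbitrary l.s.c.\ convex $\Phi$. This requires density of the domain of $\partial\Phi$ in the domain of $\Phi$ and a careful limiting argument at boundary points. If it becomes too delicate, we will instead cite Minty-type maximality of $\partial\Phi$ as a monotone operator, which immediately implies maximality as a cyclically monotone relation.
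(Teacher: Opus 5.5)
The paper gives no proof of this proposition (it is quoted from Rockafellar), so your proposal has to stand on its own. Reading $\Phi$ as proper l.s.c.\ is the right repair of the statement, and Steps~1 and~3 are the standard arguments. In Step~3, add that $\Phi(\zbf)<\infty$ for $(\zbf,\ubf)\in\Gamma$: otherwise the appended-chain inequality would force $\Phi\equiv+\infty$, contradicting $\Phi(\zbf_1)=0$.

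The genuine gap is Step~2, on which both maximality and your second uniqueness argument rest. The representation formula is true, but you only assert it. Density of $\mathrm{dom}\,\partial\Phi$ plus lower semicontinuity cannot prove it, because they say nothing about whether chain sums approximate increments of $\Phi$. The missing ingredient is one-dimensional integration along segments. Take $\zbf_1\in\mathrm{ri}(\mathrm{dom}\,\Phi)\subset\mathrm{dom}\,\partial\Phi$ and $\phi(t)=\Phi(\zbf_1+t(\zbf-\zbf_1))$. A subgradient $\vbf_i$ at the mesh point $t_i$ satisfies $\inpr{\vbf_i}{\zbf-\zbf_1}\ge\phi'_-(t_i)$, while $\phi(t_{i+1})-\phi(t_i)\le\phi'_-(t_{i+1})(t_{i+1}-t_i)$. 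By monotonicity of $\phi'_-$, the chain sum through $t_0<\dots<t_N$ is at least $\phi(t_N)-\phi(0)-\max_i(t_{i+1}-t_i)\,(\phi'_-(t_N)-\phi'_-(0))$, and only then can you let $t_N\uparrow1$. Two cases need separate arguments: a base point on the relative boundary, and evaluation points $\zbf\notin\mathrm{cl}(\mathrm{dom}\,\Phi)$, where you must show the supremum is $+\infty$ even though the segment leaves the domain.

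Even granting the formula, ``comparing the enlarged potential with $\Phi$'' is circular. The enlarged potential is automatically $\ge\Phi-\Phi(\zbf_1)$, but bounding chains that pass through $(\zbf_0,\ubf_0)$ by $\Phi(\zbf)-\Phi(\zbf_1)$ already requires $\ubf_0\in\partial\Phi(\zbf_0)$. Nor does Step~1 show that $\Phi$ is determined by $\partial\Phi$; that is exactly what the formula is for. What works is to close a cycle $(\zbf_0,\ubf_0)\to(\wbf,\vbf)\to\dots\to(\zbf_0,\ubf_0)$ with the intermediate pairs in $\partial\Phi$. Cyclic monotonicity and the formula based at $\wbf$ give $\inpr{\ubf_0}{\wbf-\zbf_0}+\Phi(\zbf_0)-\Phi(\wbf)\le0$ for all $\wbf\in\mathrm{dom}\,\partial\Phi$, which also forces $\Phi(\zbf_0)<\infty$. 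Extending this to all of $\mathrm{dom}\,\Phi$ needs $\limsup_n\Phi(\wbf_n)\le\Phi(\zbf)$ for suitable $\wbf_n\to\zbf$, which l.s.c.\ does not give. Take $\wbf_n$ on a segment from a relative-interior point and use convexity, as in \citet[Theorem~7.5]{Rockafellar1970}.

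Your fallback is the better route in $\Rbb^d$, and it is elementary. Two-cycles make $(\zbf_0,\ubf_0)$ monotonically related to all of $\partial\Phi$. Let $\xbf$ minimize the coercive l.s.c.\ function $\Phi+\frac12\|\cdot-(\zbf_0+\ubf_0)\|_2^2$; then $\zbf_0+\ubf_0-\xbf\in\partial\Phi(\xbf)$. Monotonicity yields $-\|\xbf-\zbf_0\|_2^2\ge0$, hence $\xbf=\zbf_0$ and $\ubf_0\in\partial\Phi(\zbf_0)$.

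Uniqueness must then come from your first Step~4 argument, which needs one fix. If $\mathrm{dom}\,\Phi$ has empty interior, $\partial\Phi(\zbf)$ is never a singleton: it is invariant under adding vectors orthogonal to the affine hull of the domain. So work on $\mathrm{ri}(\mathrm{dom}\,\Phi)$, which is the same set for $\Phi_1$ and $\Phi_2$ because $\mathrm{ri}(\mathrm{dom}\,\Phi)\subset\mathrm{dom}\,\partial\Phi\subset\mathrm{dom}\,\Phi$. Project subgradients onto the direction space of that affine hull, and extend to the relative boundary by the same segment limit.
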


\paragraph*{Monotonicity of transport maps.}
We recall the optimality characterization of the Kantorovich problem~\eqref{equation:kantorovich} via cyclic monotonicity.
\begin{proposition}[{\citet[Theorem~5.10]{Villani2008}}]
  \label{proposition:kantorovich}
  Let $c:\Rbb^d\times\Rbb^d\to\Rbb$ be l.s.c. such that $c(\zbf,\ubf) \ge a(\zbf) + b(\ubf)$ ($\forall(\zbf,\ubf)\in\Rbb^d\times\Rbb^d$)
  for some real-valued upper semicontinuous $a\in L^1(\mu)$ and $b\in L^1(\nu)$.%
  \footnote{
    The definition of lower semicontinuity can be found in \citet[Chapter~7]{Rockafellar1970}.
  }
  Assume that the optimal cost of \eqref{equation:kantorovich} is finite.
  Then the optimal coupling $\pi_*\in\Ucal(\mu,\nu)$ is $c$-cyclically monotone.
\end{proposition}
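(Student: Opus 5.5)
The plan is the standard contradiction argument behind \cref{proposition:kantorovich}. Suppose $\pi_*$ is optimal but $\supp(\pi_*)$ is not $c$-cyclically monotone. Then there are pairs $(\zbf_1,\ubf_1),\dots,(\zbf_m,\ubf_m)\in\supp(\pi_*)$ such that
\[
  \sum_{i=1}^m c(\zbf_i,\ubf_i) - \sum_{i=1}^m c(\zbf_i,\ubf_{i+1}) =: \eta > 0 .
\]
From this cycle we will build a competitor $\tilde\pi\in\Ucal(\mu,\nu)$ with strictly smaller cost, contradicting optimality. The finiteness of the optimal cost and the bound $c\ge a+b$ with $a\in L^1(\mu)$, $b\in L^1(\nu)$ guarantee that $\int c\,\rd\pi$ is well defined (possibly $+\infty$) for every coupling, and finite for $\pi_*$. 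So cost differences make sense.

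\textbf{Localizing.} Because $c$ is only l.s.c., we cannot directly use continuity at the points. For the \emph{strict} inequality, lower semicontinuity gives $\liminf c \ge c(\zbf_i,\ubf_{i+1})$, which points the wrong way. The standard remedy is to work first with continuous $c$, or to approximate $c$ from below by continuous costs $c_k\uparrow c$, and then pass to the limit. I would do the continuous case first: choose small open neighborhoods $U_i\ni \zbf_i$ and $V_i\ni \ubf_i$ such that
\begin{itemize}
  \item $c(\zbf,\ubf) > c(\zbf_i,\ubf_i)-\eta/(4m)$ on $U_i\times V_i$;
  \item $c(\zbf,\ubf) < c(\zbf_i,\ubf_{i+1})+\eta/(4m)$ on $U_i\times V_{i+1}$.
\end{itemize}
Since the points lie in the support, $\lambda_i\defeq\pi_*(U_i\times V_i)>0$. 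Let $\pi_i\defeq \pi_*|_{U_i\times V_i}/\lambda_i$, with marginals $\mu_i,\nu_i$.

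\textbf{Surgery.} For small $\epsilon>0$ (with $\epsilon\le\min_i\lambda_i$) define
\[
  \tilde\pi \defeq \pi_* + \frac{\epsilon}{m}\sum_{i=1}^m\bigl(\mu_i\otimes\nu_{i+1} - \pi_i\bigr).
\]
Then $\tilde\pi\ge0$, because we only subtract mass dominated by $\pi_*$. Its first marginal is unchanged, since $\mu_i$ is added and removed. Its second marginal is unchanged by the cyclic reindexing $\sum_i\nu_{i+1}=\sum_i\nu_i$. Hence $\tilde\pi\in\Ucal(\mu,\nu)$. By the choice of neighborhoods, the cost change is at most
\[
  \frac{\epsilon}{m}\Bigl(\sum_i c(\zbf_i,\ubf_{i+1}) - \sum_i c(\zbf_i,\ubf_i) + \frac{\eta}{2}\Bigr) = -\frac{\epsilon\eta}{2m} < 0,
\]
which contradicts optimality. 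The added terms have finite integrals: on the bounded region $c$ is bounded above for continuous $c$, and it is bounded below by $a+b$, which is integrable.

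\textbf{Main obstacle: the l.s.c.\ extension.} I expect this to be the hardest step. The cleanest route is the one in \citet{Villani2008}. First show that optimal couplings are stable: write $c=\sup_k c_k$ with continuous $c_k\ge a+b$, which is possible on $\Rbb^d$ by inf-convolution. Then establish the duality or a measurable-selection characterization, namely that $\pi_*$ is concentrated on a $c$-cyclically monotone set. Finally, note that for l.s.c.\ $c$ the set where the cyclic inequality holds is closed in the relevant sense, so it contains $\supp(\pi_*)$. If that becomes too technical, I would fall back on Kantorovich duality: produce $c$-concave potentials $\Phi,\Phi^c$ with $\Phi(\zbf)+\Phi^c(\ubf)=c(\zbf,\ubf)$ $\pi_*$-a.e. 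Then $\pi_*$ is concentrated on $\partial_c\Phi$, which is $c$-cyclically monotone by \cref{definition:c-convex}, and a closure argument using lower semicontinuity passes this to the support.
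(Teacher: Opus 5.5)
\textbf{Gap in the l.s.c.\ extension.} Your neighborhood surgery is a complete and correct proof when $c$ is continuous. The l.s.c.\ extension, however, fails at its last step: the claim that for l.s.c.\ $c$ the cyclic inequality survives passage to the closure, so that it transfers from a full-measure $c$-cyclically monotone set $\Gamma$ to $\supp(\pi_*)$.

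This is the same direction problem you flagged yourself. If points of $\Gamma$ converge to $(\zbf_i,\ubf_i)$, lower semicontinuity controls the left-hand side $\sum_i c(\zbf_i,\ubf_i)$ in the right direction. On the right-hand side, though, it only gives $c(\zbf_i,\ubf_{i+1})\le\liminf c(\cdot)$, so the right-hand side can drop in the limit. Closedness of the set where the inequality holds would need that side to be upper semicontinuous.

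The step cannot be rescued, because for the topological support the statement is false under the stated hypotheses. Take $d=1$, $\mu=\nu$ uniform on $[0,1]$, and set $c(z,u)=0$ if $z=u$, $c=-1$ at the two points $(\frac12,\frac34)$ and $(\frac34,\frac12)$, and $c=1$ elsewhere. Then:
\begin{itemize}
  \item $c$ is real-valued and l.s.c., with $c\ge a+b$ for the constants $a=b=-\frac12$.
  \item The marginals are atomless, so every coupling gives the two exceptional points zero mass and therefore has cost $\ge0$.
  \item Hence the identity coupling is optimal, indeed the unique optimum, with cost $0$.
\end{itemize}
Its support is the diagonal of $[0,1]^2$, yet
\[
  c(\tfrac12,\tfrac12)+c(\tfrac34,\tfrac34)=0>-2=c(\tfrac12,\tfrac34)+c(\tfrac34,\tfrac12).
\]
Your fallback through $\partial_c\Phi$ ends with the same closure step and fails for the same reason.

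\textbf{What the cited theorem actually gives.} The paper's only justification is \citet[Theorem~5.10]{Villani2008}. It shows that $\pi_*$ is \emph{concentrated on} a $c$-cyclically monotone set, which is Villani's definition of a $c$-cyclically monotone plan. Villani notes that this set can be taken closed, which is what yields a statement about the support, when $a$, $b$, $c$ are continuous.

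The concentration statement needs no closure argument:
\begin{itemize}
  \item By Kantorovich duality, pick bounded continuous pairs $(\psi_\ell,\phi_\ell)$ with $\phi_\ell(\ubf)-\psi_\ell(\zbf)\le c(\zbf,\ubf)$ whose dual values tend to the optimal cost.
  \item The nonnegative functions $c(\zbf,\ubf)-\phi_\ell(\ubf)+\psi_\ell(\zbf)$ then tend to $0$ in $L^1(\pi_*)$, hence along a subsequence pointwise on a full-measure set $\Gamma$.
  \item For points of $\Gamma$, $\sum_i c(\zbf_i,\ubf_i)=\lim_\ell\sum_i(\phi_\ell(\ubf_{i+1})-\psi_\ell(\zbf_i))\le\sum_i c(\zbf_i,\ubf_{i+1})$.
\end{itemize}
So either restrict the support claim to continuous $c$, where your argument already suffices, or replace ``support is $c$-cyclically monotone'' by ``concentrated on a $c$-cyclically monotone set''.

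Finally, the paper literally defines $\supp(\pi)$ as the set of atoms. Under that reading the claim holds for any real-valued $c$. Your surgery then works verbatim with small multiples of $\delta_{(\zbf_i,\ubf_{i+1})}-\delta_{(\zbf_i,\ubf_i)}$ in place of the neighborhood pieces, and no continuity is needed.
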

This result is useful because we can obtain a cyclically monotone coupling only by solving the linear program---%
if we consider the discrete Kantorovich problem~\eqref{equation:discrete_kantorovich}, standard solvers solve it with $O(n^3)$ time complexity~\citep{Bonneel2011SIGGRAPH},
which is far cheaper than testing all $O(2^n)$ subsets of a graph as in \cref{definition:cm_set}.
Under standard cost $c$, the optimal coupling is readily cyclically monotone with any regular measures.

\paragraph*{Barycentric map.}
In general, the optimal coupling $\pi_*$ does not give a single-valued transport map.
Hence, we need to define the following barycentric map~\citep[Definition~5.4.2]{Ambrosio2005}.
When $\pi$ admits the disintegration $\pi=\int\pi_{\zbf}\rd\mu(\zbf)$,
\[
  T_{\pi}(\zbf) = \int_{\Rbb^d} \ubf\rd\pi_{\zbf}(\ubf),
\]
which maps each source point $\zbf$ to a weighted barycenter of its neighbors in the target $\supp(\nu)$.

Beyond \cref{proposition:kantorovich}, we can establish a stronger result to ensure the existence of a convex potential that induces the optimal transport map.
This closes the loop to relate Kantorovich to Monge in general cases.
\begin{proposition}[{\citet{Brenier1991}}]
  \label{proposition:brenier}
  Consider the Monge problem~\eqref{equation:monge} with $c(\zbf,\ubf)=\|\zbf-\ubf\|_2^2$.
  If $\mu$ has a density with respect to the Lebesgue measure, then there exists a unique optimal transport map $T$
  that satisfies $T=\nabla\Phi$ for some differentiable convex function $\Phi:\Rbb^d\to\Rbb$ such that $(\nabla\Phi)_\sharp\mu=\nu$.
\end{proposition}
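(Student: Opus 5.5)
The statement to prove is Brenier's theorem (\cref{proposition:brenier}). The plan is to build the map from the Kantorovich side. I will rely on \cref{proposition:kantorovich} for cyclic monotonicity of an optimal coupling, and on \cref{proposition:cm} to turn that monotonicity into a convex potential. Throughout I assume the standing finiteness hypothesis implicit in the statement: $\mu,\nu$ have finite second moments, so the optimal cost is finite.

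\textbf{Step 1: existence and monotonicity of an optimal coupling.} Write $\|\zbf-\ubf\|_2^2=\|\zbf\|^2+\|\ubf\|^2-2\inpr{\zbf}{\ubf}$. The first two terms integrate to constants over $\Ucal(\mu,\nu)$, so minimizing the quadratic cost is equivalent to minimizing $c_{\inpr{\cdot}{\cdot}}$. Existence of an optimal $\pi_*$ follows from tightness of $\Ucal(\mu,\nu)$, which is weakly compact by Prokhorov, together with lower semicontinuity of $\pi\mapsto\int c\,\rd\pi$ for the l.s.c. cost bounded below by $a(\zbf)+b(\ubf)$ with $a=-\|\zbf\|^2$ and $b=-\|\ubf\|^2$, after suitable rescaling. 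By \cref{proposition:kantorovich}, $\supp(\pi_*)$ is cyclically monotone in the usual sense.

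\textbf{Step 2: a convex potential.} Extend $\supp(\pi_*)$ to a maximal cyclically monotone relation $\Gamma$, for instance via Zorn's lemma or directly via Rockafellar's explicit construction
\[
\Phi(\zbf)=\sup\Big\{\inpr{\zbf-\zbf_m}{\ubf_m}+\sum_{k=0}^{m-1}\inpr{\zbf_{k+1}-\zbf_k}{\ubf_k}\Big\},
\]
where the supremum runs over finite chains in $\supp(\pi_*)$ starting at a fixed $(\zbf_0,\ubf_0)$. This $\Phi$ is convex and l.s.c. as a supremum of affine functions. Cyclic monotonicity gives $\Phi(\zbf_0)=0<\infty$, and it also gives $\supp(\pi_*)\subset\partial\Phi$, consistent with \cref{proposition:cm}. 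Hence $\pi_*$ is concentrated on the graph of $\partial\Phi$.

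\textbf{Step 3: single-valuedness, which I expect to be the main obstacle.} We need $\Phi$ finite on the interior of the convex hull of $\supp(\mu)$. The boundary of a convex set is Lebesgue-null, so by absolute continuity $\mu$ charges only that interior. On the interior, $\Phi$ is locally Lipschitz, so Rademacher's theorem (or Alexandrov's theorem for convex functions) shows $\Phi$ is differentiable Lebesgue-a.e. Thus $\partial\Phi(\zbf)=\{\nabla\Phi(\zbf)\}$ for $\mu$-a.e.\ $\zbf$. Finiteness requires care: $\Phi<\infty$ on the projection of $\supp(\pi_*)$, hence on its convex hull by convexity, and that suffices. Consequently $\pi_*=(\mathrm{id},\nabla\Phi)_\sharp\mu$, which gives $(\nabla\Phi)_\sharp\mu=\nu$ and shows that $T=\nabla\Phi$ attains the Kantorovich value, hence also the smaller Monge value. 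The statement asks for $\Phi$ differentiable on all of $\Rbb^d$; I would read this $\mu$-a.e., or else need an extra regularization, which I would flag.

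\textbf{Step 4: uniqueness.} Suppose $\nabla\Phi_1$ and $\nabla\Phi_2$ are both optimal. Then $\tfrac12(\pi_1+\pi_2)$ is optimal by linearity, so by Steps 1--3 it is also supported on the graph of a single gradient map $\nabla\Psi$. A measure supported on a graph has a Dirac disintegration. Since $\tfrac12(\delta_{\nabla\Phi_1(\zbf)}+\delta_{\nabla\Phi_2(\zbf)})$ is a Dirac only when the two points coincide, we get $\nabla\Phi_1=\nabla\Phi_2$ $\mu$-a.e.
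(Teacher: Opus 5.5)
The paper gives no proof of this proposition: it quotes it from \citet{Brenier1991} as background, so there is no in-paper argument to compare against. Your proof is correct. It follows the now-standard route through cyclical monotonicity and Rockafellar's construction, not Brenier's original argument, which goes through the dual problem. There $\Phi$ is obtained directly as an optimal Kantorovich potential: a convex function with $\Phi(\zbf)+\Phi^*(\ubf)\ge\inpr{\zbf}{\ubf}$, with equality on $\supp(\pi_*)$. Your route needs only primal existence, \cref{proposition:kantorovich}, and Rockafellar's theorem, all of which the paper already lists, and it never has to establish dual attainment. The duality route, by contrast, is the one the paper itself mirrors in the proofs of \cref{theorem:barycentric-CM} and \cref{theorem:laguerre-CM}. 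There the potential is built as a maximum of affine functions from the optimal dual vector $\gbf^*$.

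A few points to tighten.
\begin{itemize}
\item In Step 3 you call the Monge value the smaller one; it is the larger. The conclusion comes from a sandwich: the cost of $\nabla\Phi$ equals the Kantorovich minimum, which is at most the Monge infimum, which is at most the cost of $\nabla\Phi$.
\item For finiteness, avoid $\convhull(\supp\mu)$, whose interior you would otherwise have to compare with that of the hull of the projection. Instead, $\mu$ is concentrated on $\Pi_1(\supp\pi_*)\subset\mathrm{dom}\,\Phi$. This set is $\sigma$-compact, hence Borel, and $\mu(\Pi_1(\supp\pi_*))\ge\pi_*(\supp\pi_*)=1$. Since the boundary of the convex set $\mathrm{dom}\,\Phi$ is Lebesgue-null, $\mu(\mathrm{int}\,\mathrm{dom}\,\Phi)=1$.
\item In Step 4, start from two arbitrary optimal maps $T_1,T_2$, not gradients. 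The argument is unchanged: Step 3 shows the Monge and Kantorovich values coincide, so $(\mathrm{id},T_i)_\sharp\mu$ are optimal plans.
\end{itemize}

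Finally, your caveat is warranted and should be stated as a correction to the proposition. Take $\mu$ uniform on $[0,1]$ and $\nu=\frac12(\delta_0+\delta_1)$; the potential has a kink at $1/2$. No everywhere-differentiable convex $\Phi$ works, because such a $\Phi$ is automatically $C^1$. Its gradient then cannot agree a.e.\ on the connected set $(0,1)$ with a nonconstant $\{0,1\}$-valued map. This is exactly the semi-discrete Laguerre setting the paper uses for test prediction. Likewise, $\Phi$ need not be finite on all of $\Rbb^d$. For $\nu$ standard Gaussian, the map is the Gaussian quantile function, which blows up at $1$ and forces $\Phi=+\infty$ on $(1,\infty)$. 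The accurate statement assumes finite second moments and concludes that $\Phi$ is proper, l.s.c., convex, and differentiable $\mu$-a.e., with $T$ unique $\mu$-a.e.
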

Make sure that this characterization is valid only when source $\mu$ is non-singular.
Otherwise, we must allow a transport plan to become multivalued $T_\pi(\zbf)\defeq\setcomp{\ubf}{(\zbf,\ubf) \in \supp(\pi)}$.
In parallel, $T_\pi$ is a \emph{sub}gradient of some convex potential~\citep{Peyre2024}.

\section{BRENIER IR}
\label{section:brenier_ir}
Consider the following CMIR (cyclically monotone isotonic regression) problem.
\begin{mdframed}[style=myframe]
  For a dataset $\set{(\zbf_i,\ybf_i)}_{i\in[n]}$, where $\zbf_i\in\Rbb^d$ is an input and $\ybf_i\in\set{0,1}^d$ is a one-hot encoded label, solve the following problem:
  \begin{align}
    \label{equation:cmir}
    & \min_{\hat\ybf_1,\dots,\hat\ybf_n\in\Rbb^d}\frac1n\sum_{i\in[n]}\|\ybf_i-\hat\ybf_i\|_2^2 \\
    \nonumber
    & \text{such that $\set{(\zbf_i,\hat\ybf_i)}_{i\in[n]}$ is cyclically monotone.}
  \end{align}
\end{mdframed}
The cyclically monotone constraint in CMIR can be reformulated as a discrete Kantorovich problem~\eqref{equation:discrete_kantorovich}, instead of directly unrolling \cref{definition:cm_set}.
\begin{enumerate}
  \vspace{-0.8\baselineskip} 

  \item Instead of optimize in $\set{\hat\ybf_i}_{i\in[n]}$, we solve the following \method{BrenierIR} problem.
  \begin{equation}
    \label{equation:brenier_ir}
    \begin{aligned}
      \min_{\ubf_1,\dots,\ubf_n\in\triangle^{d-1}} \; & \frac1n\|\Ybf - n\Pbf\Ubf\|_\frob^2 \\
      \text{subject to} \quad & \Pbf \in \argmin_{\Pbf \in \Bcal(n,n)} \inpr{\Cbf}{\Pbf},
    \end{aligned}
  \end{equation}
  where $\Ybf,\Ubf\in\Rbb^{n\times d}$ store $\set{\ybf_i}$ and $\set{\ubf_j}$ in rows, respectively.
  We specify the inner problem below.
  
  \item The inner problem of~\eqref{equation:brenier_ir} is the discrete Kantorovich problem~\eqref{equation:discrete_kantorovich} with source $\mu=\frac1n\sum_{i\in[n]}\delta_{\zbf_i}$, target $\nu=\frac1n\sum_{j\in[n]}\delta_{\ubf_j}$,
  and the cost $C_{ij}=\|\zbf_i-\ubf_j\|_2^2$.
  Denote its solution by $\Pbf^*\in\Bcal(n,n)$.

  \item With the barycentric map~\citep{Ferrandas2014SIAM}
  \begin{equation}
    \label{equation:barycentric_map}
    T_{\Pbf^*}(\zbf_i) = \frac{\sum_{j\in[n]}P^*_{ij}\ubf_j}{\sum_{j\in[n]}P^*_{ij}} = n\sum_{j\in[n]}P^*_{ij}\ubf_j,
  \end{equation}
  the prediction is given by $\hat\ybf_i=T_{\Pbf^*}(\zbf_i)$.
\end{enumerate}
Unlike the original CMIR, the inner problem of~\eqref{equation:brenier_ir} can be solved efficiently in practice via the implementation in \cref{figure:code} (described later).
The latent variable $\set{\ubf_j}$ can be viewed as the \emph{vector quantiles} of the observed variable $\set{\zbf_i}$~\citep{Carlier2016AOS}.
Thus, the transport associated to the coupling can be interpreted as a transformation from the raw input to the vector quantiles.
In most of the scenarios, the optimal coupling $\Pbf^*$ is associated with a permutation $\sigma^*$ (see \cref{proposition:permutation}).
Then the barycentric map $T_{\Pbf^*}$ is the optimal permutation from $\set{\zbf_i}$ to $\set{\ubf_j}$,
and $\zbf_i\mapsto\ubf_{\sigma^*(i)}$ has a cyclically monotone graph, thanks to \cref{proposition:kantorovich}.
Therefore, we have encoded the cyclic monotonicity constraint via the Kantorovich problem.

\begin{figure}
  \centering
  \includegraphics[width=\columnwidth]{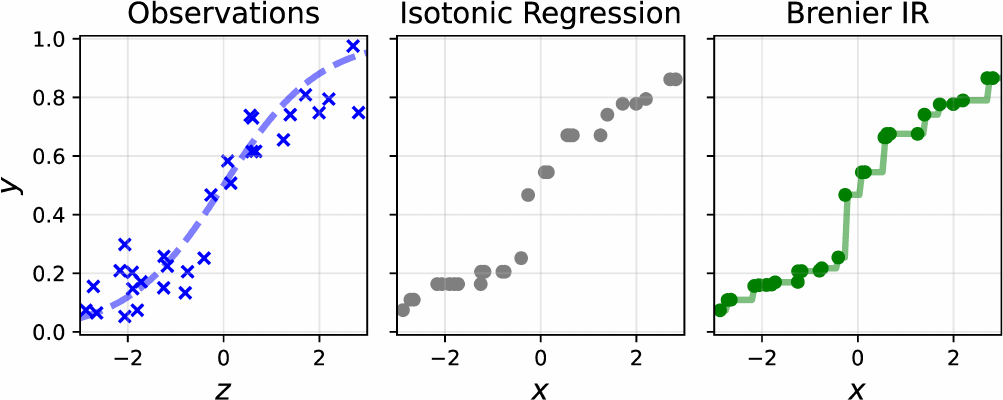}
  \caption{
    Comparison between IR and \method{BIR}.
    Observations are generated with $y=1/(1+\exp(-z))+\text{noise}$.
    The real line of \method{BIR} is computed via the Laguerre map over $\Rbb$.
  }
  \label{figure:compare}
\end{figure}

\paragraph*{Univariate case.}
Consider a 1D dataset $\set{(z_i,y_i)}\subset\Rbb\times\Rbb$ and latents $\ubf\defeq(u_i)_{i\in[n]}\subset\Rbb^n$ instead.
With the cost $C_{ij}=(z_i-u_j)^2$, the regression problem is
\begin{equation}
  \label{equation:brenier_ir_1d}
  \begin{aligned}
    \min_{\substack{\ubf,\hat\ybf\in\Rbb^n \\ \hat\ybf = n\Pbf\ubf}} \frac1n\sum_{i\in[n]}(y_i-\hat y_i)^2
    \text{~~subject to~~} \Pbf \in \Pcal(\ubf),
  \end{aligned}
\end{equation}
where $\Pcal(\ubf) \defeq \argmin\setcomp{\inpr{\Cbf}{\Pbf}}{\Pbf\in\Bcal(n,n)}$.
When $(z_i)_{i\in[n]}$ is increasing, we show that the set of $\hat\ybf$ spans the whole space of nondecreasing sequences.
\begin{theorem}
  \label{theorem:1d_monotonicity}
  Suppose $z_1<z_2<\dots<z_n$.
  Then the following equivalence holds:
\[
\begin{aligned}
  & \setcomp{ \hat{\ybf} = n \Pbf \ubf }
     { \Pbf \in \Pcal(\ubf)
     \text{ for some } \ubf \in \Rbb^n } \\
  & \hspace{50pt} =
    \setcomp{ \vbf \in \Rbb^n }{ v_1 \le v_2 \le \dots \le v_n }.
\end{aligned}
\]

\end{theorem}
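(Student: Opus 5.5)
The plan is to prove the two inclusions separately. Throughout I use the expansion $C_{ij}=(z_i-u_j)^2=z_i^2+u_j^2-2z_iu_j$. Since every $\Pbf\in\Bcal(n,n)$ has fixed marginals, $\inpr{\Cbf}{\Pbf}=\text{const}-2\sum_{i,j}z_iu_jP_{ij}$. Minimizing the transport cost is therefore the same as maximizing $\sum_{i,j}z_iu_jP_{ij}$.

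\emph{Inclusion ``$\supseteq$''.} Take any $\vbf$ with $v_1\le\dots\le v_n$ and set $\ubf\defeq\vbf$ and $\Pbf\defeq\Pbf^{\mathrm{id}}$, the identity permutation matrix scaled by $1/n$. By \cref{proposition:permutation}, the minimum over $\Bcal(n,n)$ is attained at some permutation matrix. So it suffices to compare $\Pbf^{\mathrm{id}}$ with every $\Pbf^\sigma$, that is, to show $\sum_i z_iv_i\ge\sum_i z_iv_{\sigma(i)}$ for all $\sigma\in\mathrm{Perm}(n)$. This is exactly the rearrangement inequality, since both sequences are sorted in the same order. (Alternatively: any $\sigma$ with an inversion can be uncrossed by a transposition that does not decrease the objective, and this process terminates at the identity.) Hence $\Pbf^{\mathrm{id}}\in\Pcal(\vbf)$, and $n\Pbf^{\mathrm{id}}\vbf=\vbf$.

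\emph{Inclusion ``$\subseteq$''.} Fix an arbitrary $\ubf\in\Rbb^n$ and any $\Pbf\in\Pcal(\ubf)$; I must show that $\hat\ybf=n\Pbf\ubf$ is nondecreasing. The key claim is an ordering of the support. If $P_{ij}>0$ and $P_{kl}>0$ with $i<k$, then $u_j\le u_l$. To prove it, suppose instead $u_j>u_l$. Move a mass $\epsilon=\min(P_{ij},P_{kl})>0$ from the entries $(i,j),(k,l)$ to the entries $(i,l),(k,j)$. This keeps $\Pbf$ in $\Bcal(n,n)$ and changes the cost by
\[
  \epsilon\bigl[(z_i-u_l)^2+(z_k-u_j)^2-(z_i-u_j)^2-(z_k-u_l)^2\bigr]=-2\epsilon(z_k-z_i)(u_j-u_l)<0,
\]
where strictness uses $z_i<z_k$. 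This contradicts optimality. The same conclusion also follows from the $c$-cyclic monotonicity of $\supp(\Pbf)$ given by \cref{proposition:kantorovich} with $m=2$, but the direct exchange argument is self-contained.

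Given the claim, for $i<k$ every value $u_j$ charged by row $i$ is at most every value $u_l$ charged by row $k$. Now $\hat y_i=n\sum_jP_{ij}u_j$ is a convex combination of the values charged by row $i$, because $n\sum_jP_{ij}=1$. Hence $\hat y_i\le\max_{j:P_{ij}>0}u_j\le\min_{l:P_{kl}>0}u_l\le\hat y_k$, which gives monotonicity.

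\emph{Main obstacle.} The only delicate point is that an optimal $\Pbf$ need not be a permutation matrix when $\ubf$ has ties, and row $i$ may then spread mass over several $u_j$. The support-ordering claim handles this without invoking the convex-hull description in \cref{proposition:permutation}. As a backup route, I could use that description directly: each optimal permutation $\sigma$ gives a nondecreasing $(u_{\sigma(i)})_i$ by the same exchange argument, and a convex combination of nondecreasing vectors is nondecreasing.
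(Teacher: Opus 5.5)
Your proof is correct. The reverse inclusion is essentially the paper's: it also takes $\ubf=\vbf$ and $\Pbf=\frac1n\Ibf_n$. It justifies optimality by the uncrossing identity $(C_{il}+C_{kj})-(C_{ij}+C_{kl})=2(z_k-z_i)(u_l-u_j)$, which is the exchange step behind the rearrangement inequality you cite.

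The forward inclusion follows a different route. The paper first argues that every optimal \emph{permutation} is order-preserving. It then uses the second part of \cref{proposition:permutation} to write an arbitrary $\Pbf\in\Pcal(\ubf)$ as a convex combination of optimal permutation matrices. It concludes because a convex combination of nondecreasing vectors is nondecreasing; this is exactly your backup route. You instead prove a support-ordering lemma for an arbitrary optimal coupling by a single mass exchange, which is two-point $c$-cyclic monotonicity of $\supp(\Pbf)$. You then compare row barycenters directly.

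Your route buys several things:
\begin{enumerate}
  \item It avoids the convex-hull description of the optimal face, and hence Birkhoff's theorem.
  \item It handles ties in $\ubf$ and non-permutation optima with no extra work.
  \item It makes explicit that the strict inequality $z_i<z_k$ is what forbids inversions in the support. The paper phrases its uncrossing step with a non-strict ``lower (or nonincreasing)'' cost, yet strictness is needed there to conclude that \emph{every} optimal permutation, not just some, is order-preserving.
  \item It transfers verbatim to rectangular couplings $\Pbf\in\Bcal(n,k)$. So it also shows that the univariate $k$-bin variant yields nondecreasing fits, a setting where \cref{proposition:permutation} is unavailable.
\end{enumerate}
In exchange, the paper's route reduces the forward direction to a purely combinatorial statement about permutations, which fits its Monge--Kantorovich narrative.
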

As a corollary, \eqref{equation:brenier_ir_1d} is exactly equivalent to the standard IR.
\Cref{figure:compare} demonstrates univariate \method{BrenierIR} (with the implementation described at the end of \cref{section:brenier_ir}), which perfectly matches the PAV solution.

\paragraph*{User-specified number of bins.}
The original bi-level program allows the vector quantiles $\set{\ubf_j}$ to be all distinct,
i.e., it is possible that $|\supp(\nu)|=n$.
Consequently, the inner problem in \eqref{equation:brenier_ir} is subject to $O(n^3)$ time complexity.
To improve scalability in $n$, we implement an efficient variant with a hyperparameter $k\in\Nbb$
to set the target measure $\nu\defeq \frac1k\sum_{j\in[k]}\delta_{\ubf_j}$ such that $|\supp(\nu)|\le k(\ll n)$ is possible.
The efficient version of \eqref{equation:brenier_ir} will be referred to as $k$-\method{BrenierIR} with $k$ bins, which is defined as follows:
\begin{equation}
  \label{equation:brenier_ir_small}
  \begin{aligned}
    \min_{\ubf_1,\dots,\ubf_k\in\triangle^{d-1}} \; & \frac1n\|\Ybf - n\Pbf\Ubf\|_\frob^2 \\
    \text{subject to} \quad & \Pbf \in \argmin_{\Pbf \in \Bcal(n,k)} \inpr{\Cbf}{\Pbf},
  \end{aligned}
\end{equation}
where $\Ubf\in\Rbb^{k\times d}$ stores $\set{\ubf_j}_{j\in[k]}$ in each row.
The hyperparameter $k\in\Nbb$ represents the maximum number of bins and can be specified by a user.
Usually, $k$ must be at least greater than the number of classes $d$ for a valid prediction.
Given that classical IR trades off bias and variance adaptively~\citep{Zadrozny2002KDD}, the choice of $k$ may also affect the bias-variance tradeoff of \method{BrenierIR}.

Next, we formally state our first main theoretical contribution:
\method{BrenierIR} yields a cyclically monotone regression function, regardless of whether the optimal coupling is a permutation matrix or not.
All proofs can be found in \cref{section:proofs}.
\begin{theorem}[{Cyclic monotonicity of $T_{\Pbf^*}$}]
    \label{theorem:barycentric-CM}
    For fixed $\set{\ubf_j}_{j\in[k]}$ and the squared $L_2$ cost,
    let $\Pbf^*\in\Bcal(n,k)$ be the optimal transport from the observed inputs $\mu=\frac1n\sum_{i\in[n]}\delta_{\zbf_i}$ to $\nu=\frac1k\sum_{j\in[k]}\delta_{\ubf_j}$.
    Then the barycentric map $T_{\Pbf^*}$~\eqref{equation:barycentric_map} has a cyclically monotone graph.
\end{theorem}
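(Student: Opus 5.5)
We plan to prove the claim by exhibiting a convex potential whose subdifferential contains the graph $\set{(\zbf_i,T_{\Pbf^*}(\zbf_i))}_{i\in[n]}$. By \cref{proposition:cm}, or more directly by the elementary fact that the subdifferential of a convex function is cyclically monotone, this gives the conclusion. The potential comes from discrete Kantorovich duality for the squared $L_2$ cost. Expanding $\|\zbf_i-\ubf_j\|_2^2=\|\zbf_i\|^2+\|\ubf_j\|^2-2\inpr{\zbf_i}{\ubf_j}$, the marginal constraints of $\Bcal(n,k)$ fix the first two terms. Hence $\Pbf^*$ also minimizes $\inpr{\Cbf'}{\Pbf}$ with $C'_{ij}=-\inpr{\zbf_i}{\ubf_j}$ over $\Bcal(n,k)$.

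\emph{Step 1: duality.} LP duality for this finite program yields dual variables $\alpha\in\Rbb^n$ and $\beta\in\Rbb^k$ with two properties. First, $\alpha_i+\beta_j\ge \inpr{\zbf_i}{\ubf_j}$ for all $(i,j)$. Second, complementary slackness holds: $\alpha_i+\beta_j=\inpr{\zbf_i}{\ubf_j}$ whenever $P^*_{ij}>0$. Here we maximize $-\inpr{\Cbf'}{\Pbf}$ and flip signs.

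\emph{Step 2: the potential.} Define $\Phi(\zbf)\defeq\max_{j\in[k]}\bigl(\inpr{\zbf}{\ubf_j}-\beta_j\bigr)$. This is a finite maximum of affine functions, so it is convex and finite everywhere. By Step 1, $\Phi(\zbf_i)\le\alpha_i$, with equality attained at every $j$ in $S_i\defeq\setcomp{j}{P^*_{ij}>0}$. Thus every such $j$ is an active index at $\zbf_i$, and $\ubf_j\in\partial\Phi(\zbf_i)$ for all $j\in S_i$.

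\emph{Step 3: barycenters stay in the subdifferential.} The subdifferential $\partial\Phi(\zbf_i)$ is convex. In fact it equals the convex hull of the active $\ubf_j$. Since $n\sum_j P^*_{ij}=1$ with nonnegative weights supported on $S_i$, the barycenter $T_{\Pbf^*}(\zbf_i)=n\sum_j P^*_{ij}\ubf_j$ lies in $\partial\Phi(\zbf_i)$.

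\emph{Step 4: cyclic monotonicity.} For any cycle $i_1,\dots,i_m$, sum the subgradient inequalities $\Phi(\zbf_{i_{l+1}})\ge\Phi(\zbf_{i_l})+\inpr{T_{\Pbf^*}(\zbf_{i_l})}{\zbf_{i_{l+1}}-\zbf_{i_l}}$. The $\Phi$ terms telescope, leaving $\sum_l\inpr{\zbf_{i_l}}{T(\zbf_{i_l})}\ge\sum_l\inpr{\zbf_{i_{l+1}}}{T(\zbf_{i_l})}$. This is \eqref{equation:cm_condition} for the cost $c_{\inpr{\cdot}{\cdot}}$, after reindexing the cycle to match the convention $\ubf_{m+1}=\ubf_1$. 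If some $\zbf_i$ coincide, Step 3 still applies to the common point, since the relevant subdifferential is the same.

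The main obstacle is Step 1: it needs the duality and complementary slackness to be stated cleanly for the rectangular scaled polytope $\Bcal(n,k)$, rather than by appeal to \cref{proposition:kantorovich}. \Cref{proposition:kantorovich} only gives cyclic monotonicity of $\supp(\Pbf^*)$, and that property is not obviously preserved under averaging. If invoking LP duality is undesirable, an alternative is Rockafellar's construction. One applies it to the cyclically monotone set $\supp(\Pbf^*)$, given by \cref{proposition:kantorovich}, to obtain a convex $\Phi$ with $\supp(\Pbf^*)\subset\partial\Phi$, and then proceeds with Steps 3 and 4 unchanged.
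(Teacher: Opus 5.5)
Your proposal is correct and takes the same route as the paper: complementary slackness from discrete Kantorovich duality, the potential $\Phi$ as a maximum of affine functions (your $\beta_j$ is the paper's $\frac12\|\ubf_j\|_2^2-g_j^*$ once you pass to the inner-product cost), $\ubf_j\in\partial\Phi(\zbf_i)$ for every $j$ with $P^*_{ij}>0$, convexity of $\partial\Phi(\zbf_i)$ to absorb the barycenter, and cyclic monotonicity of subdifferentials. The differences are cosmetic: you verify the last step by telescoping instead of citing Rockafellar's Theorem~24.8, and you handle coincident $\zbf_i$ explicitly, which the paper leaves implicit.
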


\paragraph*{Test prediction.}
By solving \eqref{equation:brenier_ir_small}, we can obtain the barycentric map $\zbf_i\mapsto n\sum_{j\in[n]}P^*_{ij}\ubf_j$,
but this is applicable to training inputs only.
One way for test prediction is to re-compute \eqref{equation:brenier_ir_small} again after including test points, which is computational expensive.
A more principled approach is to leverage Brenier's theorem (see \cref{proposition:brenier}).
To do this, we need a non-singular source measure.
Consider the kernel density estimator
\begin{equation}
  \label{equation:kde}
  \tilde{\mu}(\zbf) \defeq \frac{1}{nh}\sum_{i\in[n]}K\left(\frac{\zbf - \zbf_i}{h}\right) \mathscr{L}^d(\zbf),
\end{equation}
where $\mathscr{L}^d$ is the Lebesgue measure in $\Rbb^d$ and $K$ is the standard normal density.
We can apply Brenier's theorem to non-singular $\tilde\mu$.

When source $\tilde{\mu}$ is continuous and target $\nu$ is discrete, the transport problem is called semi-discrete.
We introduce its solution based on \citet[Section~5]{Peyre2019}.
Given $\tilde\mu$ and $\nu=k^{-1}\sum_j\delta_{\ubf_j}$ with the squared $L_2$ cost, the Kantorovich problem~\eqref{equation:kantorovich} admits the dual formula:
\begin{equation}
  \label{equation:kantorovich_dual}
  \sup_{\gbf\in\Rbb^k}\Biggl\{\int_{\Rbb^d}g^c\rd\tilde\mu + \frac1k\inpr{\gbf}{\oneds_k}\Biggr\},
\end{equation}
where $\gbf\defeq[g(\ubf_1),\dots,g(\ubf_k)]^\top$ is the Lagrangian multiplier with respect to $\nu$,
$g:\Rbb^d\to\Rbb\cup\set{\infty}$ is the Kantorovich dual potential,
and $g^c$ is the $c$-transform of $g$.
Once \eqref{equation:kantorovich_dual} is solved, consider the \emph{Laguerre cells} associated to the Kantorovich potential $\gbf$:
\[
  \begin{aligned}
    &\Lbb_{\gbf}(\ubf_j) \\
    &\defeq \setcomp{ \zbf }{
      \forall j' \ne j,\, c(\zbf,\ubf_j) - g_j 
      \le c(\zbf,\ubf_{j'}) - g_{j'} }.
  \end{aligned}
\]

The Laguerre cells induce a disjoint partition such that $\Rbb^d = \bigsqcup_{j}\Lbb_{\gbf}(\ubf_j)$, which is a weighted extension of Voronoi tessellations~\citep{Aurenhammer1987}.
Given $\zbf\in\Rbb^d$ and an optimizer $g^*$ for \eqref{equation:kantorovich_dual},
define the Laguerre map $T_{\gbf^*}$ by $T_{\gbf^*}(\zbf)=\ubf_j$,
where $\Lbb_{\gbf^*}(\ubf_j)$ is the unique partition such that $\zbf\in\Lbb_{\gbf^*}(\ubf_j)$.
We note that $T_{\gbf^*}$ is piecewise constant by construction.
Therefore, the Laguerre map naturally extends the idea of the binning estimator to the multinomial case, and each quantile $\ubf_j$ serves as an individual bin level.
The map obtained by a measurable selection from Laguerre cells also exhibits the cyclically monotone property.
\begin{theorem}[Cyclic monotonicity of Laguerre map]
\label{theorem:laguerre-CM}
Let $\mu$ be an absolutely continuous probability measure on $\Rbb^d$ and 
$\nu=\sum_{j=1}^k b_j\delta_{\ubf_j}$ with $\bbf\in\triangle^{k-1}$. 
Let $\gbf^*\in\Rbb^k$ be an optimal dual potential to semi-discrete OT~\eqref{equation:kantorovich_dual} with the squared $L_2$ cost.
If $T_{\gbf^*}$ is a measurable selection such that $T_{\gbf^*}(\zbf)\in\setcomp{\ubf_j}{\zbf\in\Lbb_{\gbf^*}(\ubf_j)}$,
then $T_{\gbf^*}$ has a cyclically monotone graph.
\end{theorem}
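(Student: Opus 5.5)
The plan is to show directly that the graph of $T_{\gbf^*}$ is contained in the subdifferential of an explicit convex function. After that, cyclic monotonicity follows from the easy direction of \cref{proposition:cm}: the subdifferential of a convex function is cyclically monotone. Note that neither absolute continuity of $\mu$ nor the optimality of $\gbf^*$ is really needed for this. The argument works for any $\gbf\in\Rbb^k$ and any selection from the Laguerre cells; these hypotheses only matter for $T_{\gbf^*}$ being the transport map.

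\emph{Step 1: rewrite the Laguerre condition.} For $c(\zbf,\ubf)=\|\zbf-\ubf\|_2^2$, the inequality $c(\zbf,\ubf_j)-g_j\le c(\zbf,\ubf_{j'})-g_{j'}$ becomes, after cancelling $\|\zbf\|_2^2$,
\[
  \inpr{\zbf}{\ubf_j}+\tfrac12\bigl(g_j-\|\ubf_j\|_2^2\bigr) \ \ge\ \inpr{\zbf}{\ubf_{j'}}+\tfrac12\bigl(g_{j'}-\|\ubf_{j'}\|_2^2\bigr).
\]
So we set $a_j\defeq\tfrac12(g^*_j-\|\ubf_j\|_2^2)$ and
\[
  \Phi(\zbf)\defeq\max_{j\in[k]}\bigl\{\inpr{\zbf}{\ubf_j}+a_j\bigr\}.
\]
This $\Phi$ is a finite, convex, piecewise-affine function, being a maximum of affine functions. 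Moreover, $\zbf\in\Lbb_{\gbf^*}(\ubf_j)$ holds exactly when index $j$ attains the maximum at $\zbf$.

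\emph{Step 2: subgradient inclusion.} Suppose $T_{\gbf^*}(\zbf)=\ubf_j$ with $\zbf\in\Lbb_{\gbf^*}(\ubf_j)$. Then for every $\zbf'$,
\[
  \Phi(\zbf')\ge\inpr{\zbf'}{\ubf_j}+a_j=\Phi(\zbf)+\inpr{\ubf_j}{\zbf'-\zbf}.
\]
Hence $\ubf_j\in\partial\Phi(\zbf)$, and the graph of $T_{\gbf^*}$ is a subset of $\partial\Phi$.

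\emph{Step 3: cyclic monotonicity.} For completeness we verify it directly instead of only citing \cref{proposition:cm}. Take pairs $(\zbf_i,\ubf^{(i)})$, $i\in[m]$, from the graph. Summing the subgradient inequalities $\Phi(\zbf_{i+1})\ge\Phi(\zbf_i)+\inpr{\ubf^{(i)}}{\zbf_{i+1}-\zbf_i}$ around the cycle makes the $\Phi$ terms telescope, giving
\[
  \sum_i\inpr{\zbf_i}{\ubf^{(i)}}\ge\sum_i\inpr{\zbf_{i+1}}{\ubf^{(i)}}.
\]
After reindexing, this is \eqref{equation:cm_condition} for the cost $c_{\inpr{\cdot}{\cdot}}$. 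By the remark following \cref{definition:c-convex}, it is equivalent to cyclic monotonicity for the squared $L_2$ cost.

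The main obstacle is only bookkeeping. First, the signs and the factor $\tfrac12$ in converting Laguerre cells to the max-affine form must match the paper's sign convention for $\gbf$ in \eqref{equation:kantorovich_dual}. Second, ties on cell boundaries are handled automatically, since any selected index attains the maximum. Measurability of the selection plays no role in the argument.
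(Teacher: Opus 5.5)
Your proof is correct and follows the paper's route. Both arguments build the max-affine potential $\Phi(\zbf)=\max_j\{\inpr{\zbf}{\ubf_j}-\frac12\|\ubf_j\|_2^2+\frac12 g_j^*\}$, note that any selected Laguerre index attains the maximum so the graph lies in $\partial\Phi$, and conclude cyclic monotonicity; the paper cites Rockafellar's Theorem~24.8 at the last step where you telescope directly, and your coefficient $\frac12 g_j^*$ is the correct one for the unhalved cost, whereas the paper's displayed $\Phi$ writes $g_j^*$, a slip carried over from the halved-cost proof of \cref{theorem:barycentric-CM}.
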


As the Laguerre map $T_{\gbf^*}$ ends up with the gradient of Brenier's potential, we call our approach to CMIR as \emph{Brenier isotonic regression}.
The piecewise-constant nature is favorable for probability calibration, while the smoothness of $T$ should be improved for specific applications~\citep{Paty2020AISTATS,Peyre2024}.

\begin{figure}[t]
\begin{lstlisting}[language=Python]
 import numpy as np
 import ot
 from ot.utils.unif
 from scipy.optimize import minimize
 
 def obj(u_):
     u = u_.reshape(k, d)
     C = ot.dist(z, u)
     P = ot.emd(unif(n), unif(k), C)
     return np.sum((y - n*P@u)**2)/n
 
 res = minimize(obj, [..] method='SLSQP')
 u_opt = res.x.reshape(k, d)
\end{lstlisting}
\caption{\texttt{scipy} implementation of \method{BrenierIR}.
  Full code at \faGithub~\href{https://github.com/levelfour/Brenier_Isotonic_Regression}{\texttt{levelfour/Brenier\_Isotonic\_Regression}}.}
\label{figure:code}
\end{figure}

\paragraph*{Implementation.}
We describe a practical implementation of \eqref{equation:brenier_ir_small}.
A solution to \eqref{equation:brenier_ir} is recovered as its special case.
Generally speaking, bi-level programs including \eqref{equation:brenier_ir_small} is under a family of non-convex programs, called \emph{mathematical programming with equilibrium constraints} (MPEC)~\citep{Luo1996},
which has been an active research topic in the operations research community.
In our early experiments, an MPEC-related approach~\citep{Hillbrecht2024} did not perform ideally, while theoretically grounded.
Instead, we resort to simply estimating the derivative of the whole bi-level objective \eqref{equation:brenier_ir_small} in $\Ubf$ by the finite difference method.
This practically works surprisingly well, even though the objective \eqref{equation:brenier_ir_small} is not differentiable.%
\footnote{
  This is because a tiny perturbation to $\Ubf$ may bring the linear cost $\Cbf$ from one normal cone of $\Bcal(n,k)$ to another, where differentiability breaks down.
}
We summarize the implementation with \texttt{scipy}~\citep{2020SciPy-NMeth} and \texttt{pot} (Python OT)~\citep{POT} in \cref{figure:code}, where the main procedure consists of only a few lines.
With \texttt{jac=None} option, \texttt{scipy} uses the finite difference method to estimate the Jacobian.
For the optimization method, we used Sequential Quadratic Programming (SQP)~\citep{Nocedal2006}, specified by \texttt{method='SLSQP'},
which is an efficient second-order method and can deal with the linear constraints $\ubf_j\in\triangle^{d-1}$.

To compute the Laguerre cells, we need to solve semi-discrete OT~\eqref{equation:kantorovich_dual}.
It is typically solved by the averaged SGD~\citep{Genevay2016NeurIPS} over fresh samples from $\tilde\mu$~\eqref{equation:kde}.
While the time complexity is comparable to the primal Kantorovich problem, we adopt an additional heuristic to use the dual potential $\gbf^*$ returned by the primal Kantorovich problem without sampling fresh points.
This simple procedure suffices practically.
The Kantorovich potential can be obtained with \texttt{ot.emd},
which solves the primal Kantorovich problem, also known as the Earth Mover's Distance (EMD).

\section{APPLICATION 1: PROBABILITY~CALIBRATION}
\label{section:calibration}

\begin{table*}[t]
\centering
\caption{
  Recalibration results for MLP \textbf{(upper table)} and linear SVM \textbf{(lower table)}.
  Each number indicates the $L_1$ calibration error (lower is better) with averaging $10$ trials, and bold-faced if the recalibrator achieves the \underline{best or second best} performance or statistically indistinguishable from them by the Mann--Whitney $U$ test (significance level: $5$\%).
}
\label{table:calib}
\fontsize{9pt}{9pt}\selectfont
\begin{tabular}{lccccccccCCC}
\toprule
\multirow{2}{*}{\diagbox[width=10.5em]{Dataset}{Recalibrator}} & \multirow{2}{*}{---} & \multirow{2}{*}{\method{Bin}} & \multirow{2}{*}{\method{Dir}} & \multirow{2}{*}{\method{IRP}} & \multirow{2}{*}{\method{IR}} & \multirow{2}{*}{\method{MS}} & \multirow{2}{*}{\method{OI}} & \multirow{2}{*}{\method{TS}} & \multicolumn{3}{c}{\method{BrenierIR} \tiny{(Ours)}} \\
\cmidrule(lr){10-12}
&&&&&&&&& $k=15$ & $k=30$ & $k=50$ \\
\midrule
balance-scale & 0.244 & 0.184 & 0.108 & \textbf{0.068} & 0.139 & 0.171 & 0.140 & 0.177 & \textbf{0.061} & 0.070 & 0.084 \\
car & 0.063 & 0.050 & \textbf{0.030} & \textbf{0.031} & 0.034 & 0.037 & 0.132 & 0.036 & 0.045 & 0.040 & 0.042 \\
cleveland & 0.914 & 0.921 & 0.828 & \textbf{0.224} & 0.853 & 0.774 & 0.938 & 1.066 & \textbf{0.519} & 0.655 & 0.759 \\
dermatology & 0.178 & 0.187 & 0.153 & 0.204 & \textbf{0.139} & 0.167 & 0.798 & 0.163 & \textbf{0.122} & 0.159 & 0.170 \\
glass & 0.859 & 0.843 & 0.856 & \textbf{0.574} & 0.652 & 0.753 & 0.951 & 0.884 & \textbf{0.579} & 0.635 & 0.671 \\
vehicle & 0.294 & 0.300 & 0.208 & \textbf{0.103} & 0.199 & 0.310 & 0.474 & 0.298 & 0.177 & \textbf{0.145} & 0.202 \\
\bottomrule
\end{tabular} \\
\vspace{3pt}
\begin{tabular}{lccccccccCCC}
\toprule
\multirow{2}{*}{\diagbox[width=10.5em]{Dataset}{Recalibrator}} & \multirow{2}{*}{---} & \multirow{2}{*}{\method{Bin}} & \multirow{2}{*}{\method{Dir}} & \multirow{2}{*}{\method{IRP}} & \multirow{2}{*}{\method{IR}} & \multirow{2}{*}{\method{MS}} & \multirow{2}{*}{\method{OI}} & \multirow{2}{*}{\method{TS}} & \multicolumn{3}{c}{\method{BrenierIR} \tiny{(Ours)}} \\
\cmidrule(lr){10-12}
&&&&&&&&& $k=15$ & $k=30$ & $k=50$ \\
\midrule
balance-scale & 0.110 & 0.236 & 0.283 & \textbf{0.012} & 0.268 & 0.160 & 0.698 & 0.160 & \textbf{0.088} & 0.096 & 0.106 \\
car & \textbf{0.106} & 0.284 & 0.332 & 0.558 & 0.266 & 0.413 & 0.717 & 0.589 & \textbf{0.121} & 0.179 & 0.173 \\
cleveland & 0.784 & 0.855 & 0.732 & \textbf{0.255} & 0.905 & 0.655 & 0.746 & 0.896 & \textbf{0.573} & 0.725 & 0.730 \\
dermatology & 0.253 & 0.289 & 0.192 & 0.314 & \textbf{0.082} & 0.144 & 0.436 & 0.635 & 0.139 & 0.128 & \textbf{0.126} \\
glass & 0.831 & 0.795 & 0.846 & \textbf{0.044} & 0.649 & 0.711 & 0.780 & 0.905 & \textbf{0.647} & 0.685 & 0.799 \\
vehicle & 0.553 & 0.444 & 0.536 & \textbf{0.009} & 0.456 & 0.515 & 0.600 & 0.567 & \textbf{0.308} & 0.402 & 0.450 \\
\bottomrule
\end{tabular}
\end{table*}

\begin{figure*}[t]
  \centering
  \includegraphics[width=0.32\textwidth]{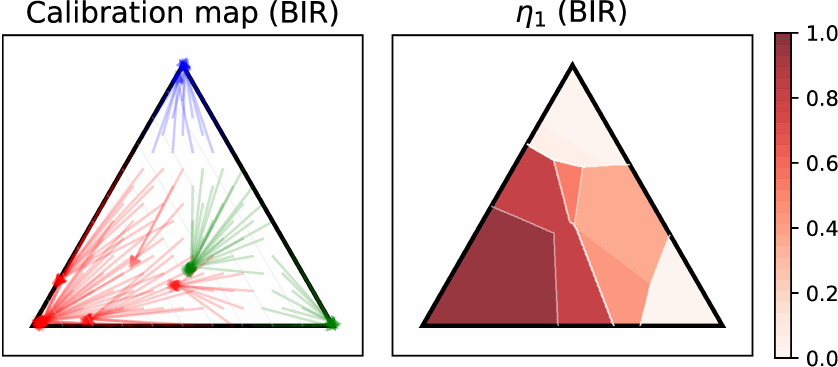}
  \includegraphics[width=0.32\textwidth]{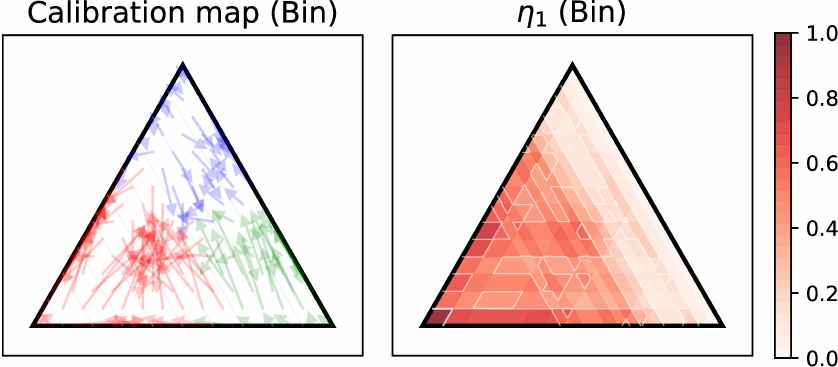}
  \includegraphics[width=0.32\textwidth]{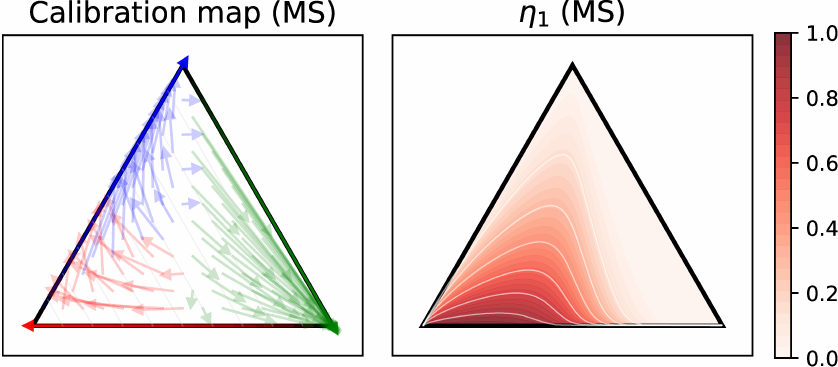}
  \caption{
    From left to right, \method{BrenierIR} ($k=50$), \method{Binning}, and \method{Matrix Scaling}.
    We show their estimated calibration maps as vector fields (each left) and contour plots of their first coordinate (each right).
  }
  \label{figure:cm}
\end{figure*}

As the first application, we use \method{BrenierIR} for calibrating multiclass classifiers.
Consider a probabilistic classifier $\hat{\pbf}:\Xcal \to \triangle^{d-1}$ that outputs class probabilities over $d$ classes.
It is called \emph{calibrated} if, for every prediction vector $\qbf\in\triangle^{d-1}$,
the conditional class proportions match the prediction:
\begin{equation}
  \label{equation:calibration}
  \forall i\in[d], \quad \Pr(Y=i\mid\hat\pbf(X)=\qbf) = q_i.
\end{equation}
When a decision maker acts in response to a forecaster, calibration ensures ``they speak the same language''~\citep{Foster2021}.
The measure equivalent to \eqref{equation:calibration} is the $L_1$ calibration error (CE)~\citep{Murphy1973}, while weaker measurements such as class-wise calibration error (cwCE) and expected calibration error (ECE)~\citep{Naeini2015AAAI} are also commonly used~\citep{Gruber2022NeurIPS}.

We are interested in recalibration (also known as post-hoc calibration):
Given a classifier $\hat\pbf$ and a small calibration set $\set{(\xbf_i,\ybf_i)}_{i\in[n]}\subset\Xcal\times\set{0,1}^d$,
estimate the canonical calibration map~\citep{Vaicenavicius2019AISTATS}:
\[
\eta(\qbf)=[\Pr(Y=i\mid \hat\pbf(X)=\qbf)]_{i\in[d]}.
\]
We estimate $\eta$ by \method{BrenierIR} applied on the pushforwarded calibration set $\set{(\hat\pbf(\xbf_i),\ybf_i)}_{i\in[n]}$.

\paragraph*{Benchmark.}
\Cref{table:calib} compares \method{BrenierIR} (shortened as \method{BIR}${}_k$) with baseline recalibrators:
\method{Bin} (OvR binning)~\citep{Zadrozny2001ICML},
\method{IR} (OvR isotonic regression)~\citep{Zadrozny2002KDD},
\method{MS} (matrix scaling, extending Platt's scaling)~\citep{Guo2017ICML},
\method{TS} (temperature scaling)~\citep{Guo2017ICML},
\method{Dir} (Dirichlet calibration)~\citep{Kull2019NeurIPS},
\method{OI} (order-invariant network)~\citep{Rahimi2020NeurIPS},
and \method{IRP} (iterative recursive partitioning)~\citep{Berta2024AISTATS}.
We recalibrated MLP ($D$-ReLU-$100$-ReLU-$d$ with the $L_2$ regularization strength $10^{-4}$, trained by Adam with $200$ epochs) and linear SVM (with the $L_2$ regularization strength $40$) and compared the recalibration performance by the $L_1$ calibration error (CE), which is the empirical estimate of $\E\|\Pr(Y|\hat\pbf(X)=\qbf) - \qbf\|_1$ averaged across $\qbf$ in the same bin $B\in\triangle^{d-1}$.
Here, the simplex binning is constructed by splitting each axis uniformly into $15$ intervals.
The further details of the baselines and implementations can be found in \cref{section:experiment_detail:setup}.
From our experimental results:
(i) \method{BrenierIR} consistently outperforms many baselines and performs comparably with \method{IRP}.
(ii) Even with relatively small $k$, \method{BrenierIR} matches IRP while scaling to larger numbers of classes, where \method{IRP} struggles.
In \cref{section:experiment_detail:time}, we show the average running time of \method{BrenierIR} and \method{IRP}, demonstrating that \method{BrenierIR} scales better.
Therefore, \method{BrenierIR} serves as a powerful recalibrator with a mild computational overhead.

We tested accuracy and classwise/confidence calibration error in \cref{section:experiment_details:benchmark}.
For classwise/confidence calibration error, the overall trend remains similar to \cref{table:calib}.
For accuracy, \method{BrenierIR} performs on par with the other baselines, while \method{IRP} is no longer competitive.

\paragraph*{Visualization.}
\Cref{figure:cm} visualizes the estimated calibration maps, recalibrated on a linear SVM trained with the balance-scale dataset~\citep{Shultz1994MLJ}.
\method{BrenierIR} has a strong binning effect compared with \method{Matrix Scaling}.
It also captures inter-class relations unlike the standard OvR \method{Binning}, whose contour levels are parallel to the simplex boundary.
As expected from the test prediction by the Laguerre map, the calibration map of \method{BrenierIR} concentrates on only a few points.
We visualize the calibration functions of the other baselines in \cref{section:experiment_detail:calibration_map},
where we can see qualitatively similar results between \method{BrenierIR} and \method{IRP}.

\begin{figure}
  \centering
  \includegraphics[width=0.49\columnwidth]{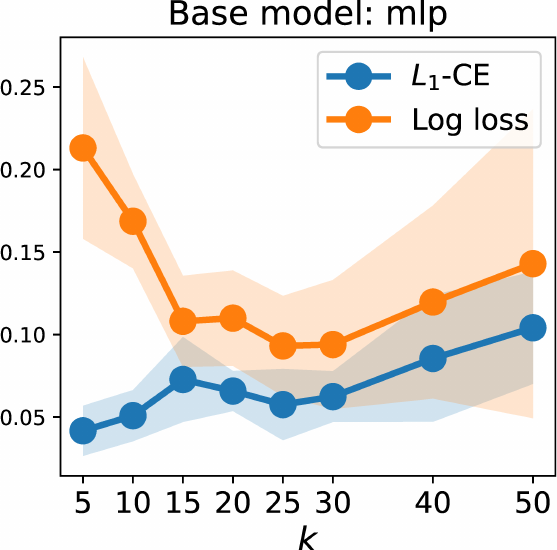}
  \includegraphics[width=0.49\columnwidth]{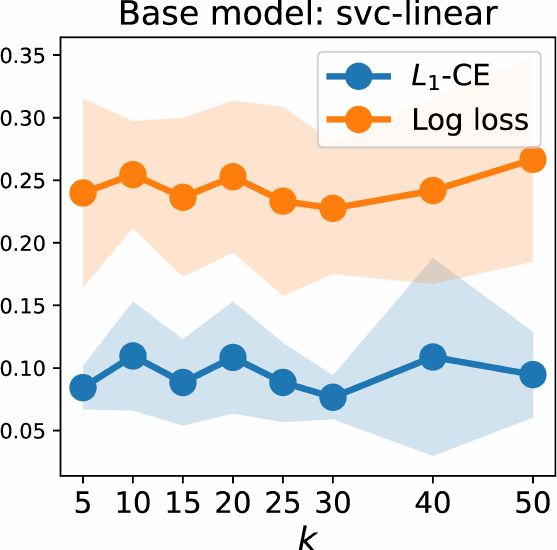}
  \caption{
    Each base model is recalibrated by \method{BIR} with different bin size $k$ (shown with the standard deviation).
  }
  \label{figure:trade}
\end{figure}

\begin{table*}[t]
\centering
\caption{
  Multiclass classification results.
  Each number indicates the accuracy and $L_1$ calibration error with averaging $10$ trials,
  and bold-faced if a method achieves the best performance or statistically indistinguishable from the best method by the Mann--Whitney $U$ test (significance level $5$\%).
}
\label{table:sim}
\fontsize{9pt}{8pt}\selectfont
\renewcommand{\arraystretch}{1.1}
\begin{tabular}{lcccCcccC}
\toprule
{Metric} & \multicolumn{4}{c}{Accuracy ($\uparrow$)} & \multicolumn{4}{c}{$L_1$ calibration error ($\downarrow$)} \\
\cmidrule(lr){1-1} \cmidrule(lr){2-5} \cmidrule(lr){6-9}
\diagbox[width=10em]{Dataset}{Method} & \method{Log} & \method{CLS} & \method{LT} & \makecell{\method{BIR}${}_{30}$ \\ \tiny{(Ours)}} & \method{Log} & \method{CLS} & \method{LT} & \makecell{\method{BIR}${}_{30}$ \\ \tiny{(Ours)}} \\
\hline
\addlinespace[2pt]
balance-scale & 0.860 & 0.682 & \textbf{0.910} & \textbf{0.895} & 0.227 & 0.507 & 0.170 & \textbf{0.117} \\
car & 0.657 & 0.509 & \textbf{0.964} & 0.808 & 0.831 & 0.899 & \textbf{0.086} & 0.120 \\
cleveland & \textbf{0.585} & 0.343 & \textbf{0.582} & \textbf{0.585} & 1.002 & 0.682 & 0.863 & \textbf{0.470} \\
glass & 0.630 & 0.328 & \textbf{0.684} & 0.595 & 1.030 & 0.682 & 0.793 & \textbf{0.608} \\
\bottomrule
\end{tabular}
\end{table*}

\paragraph*{Tradeoff in $k$.}
We trained MLP and linear SVM with the same setup on the balance-scale dataset, and recalibrated with \method{BIR} with different bin sizes $k$.
The trend shown in \cref{figure:trade} is rather clear in the MLP case: The log loss has a sweet spot in $k$.
CE exhibits a similar tradeoff for $k\in[15,50]$, but CE decreases for very small $k$.
This reflects predictions averaged over the simplex rather than genuine calibration.
The trend is less clear in the SVM case, but it calibrates slightly better around $k\simeq30$.
Overall, generalization analysis of recalibration is still underrepresented~\citep{Fujisawa2025ICML}, and further analysis is left for future work.

\section{APPLICATION 2: SINGLE-INDEX~MODELS}
\label{section:sim}

\begin{algorithm}[t]
  \caption{Brenier Single Index Model}
  \label{algorithm:brenier_sim}
  \KwIn{$T_{\max}$ total number of iters, $\Ubf_0$ randomly initialized}
  \KwOut{$\Wbf_{T_{\max}}$ learned index, $\Ubf_{T_{\max}}$ learned target support}
  \For{$0\le t\le T_{\max}-1$}{
    \Comment{Update linear predictor ($\Wbf$-step)}
    $\Wbf_{t+1} \gets \argmin_{\Wbf}J(\Wbf,\Ubf_t) + \frac{\lambda_W}{2}\|\Wbf\|_\frob^2$ \;
    \Comment{Update target measure ($\Ubf$-step)}
    $\Ubf_{t+1} \gets \argmin_{\Ubf}J(\Wbf_{t+1},\Ubf)$ \;
  }
\end{algorithm}

Let us revisit single-index models with multinomial outputs.
Given a dataset $\set{(\xbf_i,\ybf_i)}_{i\in[n]}$ for a covariate $\xbf_i\in\Xcal\subset\Rbb^D$ and a one-hot encoded label $\ybf_i\in\set{0,1}^d$,
we posit the multiclass classification model
\begin{equation}
  \label{equation:multiclass_model}
  \ybf_i\mid \xbf_i \sim \mathrm{Categorical}(\varphi(\Wbf^*\xbf_i)),
\end{equation}
where both the true index $\Wbf^*\in\Rbb^{d\times D}$ and the link function $\varphi:\Rbb^d\to\Rbb^d$ are known.
We assume the graph of $\varphi$ is cyclically monotone, as is usual to assume link monotonicity in SIMs.

\Cref{algorithm:brenier_sim} describes our approach to learn the model~\eqref{equation:multiclass_model} via \method{BrenierIR} ($k=30$).
We minimize the objective function
\[
  \begin{aligned}
    J(\Wbf,\Ubf) \defeq \; & \frac1n\|\Ybf-n\Pbf\Ubf\|_\frob^2 \quad \\
    \text{subject to} \;\; & \Pbf \in \argmin_{\Pbf \in \Bcal(n,k)}\inpr{\Cbf(\Wbf)}{\Pbf},
  \end{aligned}
\]
and the cost $\Cbf(\Wbf)$ is $C_{ij}\defeq\|\Wbf\xbf_i-\ubf_j\|_2^2$.
Both $\Wbf$- and $\Ubf$-steps can be implemented via SQP with the finite difference method, as in \cref{figure:code}.
This alternating procedure is reminiscent of the calibrated least squares~\citep[Algorithm~2]{Agarwal2014ICML}, which alternates the residual fitting and the link fitting steps.

We compared with the following baselines:
\method{Log} is the standard multinomial logistic regression with the $L_2$ regularization strength $1.0$ (optimized by L-BFGS);
\method{CLS} is the calibrated least squares~\citep{Agarwal2014ICML}.
We ran $100$ and $2000$ iters for \method{BIR} and \method{CLS}, respectively.
\method{CLS} requires introducing bases of the monotone link, for which we used $\set{z,z^2,z^3}$ as the original paper suggests.
\method{LT} is the LegendreTron~\citep{Lam2023ICML}, which learns the inverse link via ICNNs together with the linear predictor.
The baseline details are again deferred to \cref{section:experiment_detail}.

\Cref{table:sim} demonstrates the results of multiclass classification on each dataset.
We measured the accuracy and $L_1$ calibration error of each learned SIM.
\method{BIR} consistently outperforms \method{CLS}, which also models SIMs explicitly but parametrically.
This admits how a nonparametric approach performs nicely in practice.
While \method{BIR} greatly improves calibration over \method{LT}, classification performance of \method{BIR} is not ideal.
Therefore, we recommend practitioners to use \method{BIR} only for recalibration for now and leave developing better classification algorithms for future work.

\section{DISCUSSION}

\paragraph*{\method{BrenierIR} as adaptive binning.}
\method{Binning} and \method{IR} are the two most common recalibrators~\citep{Zadrozny2002KDD}.
\method{Binning} is a simple nonparametric approach yet yielding a nice bias-variance tradeoff, by averaging confidence prediction falling into the same bin~\citep{Zadrozny2001ICML}.
\method{IR} can be viewed as an adaptive binning such that the bin boundaries are determined depending on the base model (detailed in \citet{Guo2017ICML}).
While their OvR extensions are insensitive to inter-class correlation---in fact, all contour levels are parallel to the simplex boundaries in \cref{figure:cm}---%
\method{BrenierIR} captures inter-class correlation and yields class-adaptive simplex binning, as seen in \cref{figure:cm}.
Such class-adaptive simplex binning is possible by \method{IRP}~\citep{Berta2024AISTATS}, but it needs to sweep the uniform grid points over the whole probability simplex, which is computationally demanding for large $d$.
Thus, \method{BrenierIR} is a tractable and class-adaptive binning.

\paragraph*{Future work.}
Last but not least, we recognize that the computational cost of \method{BrenierIR} remains a bottleneck.
Even the inner OT requires $O(n^3)$ time complexity, and the total time complexity of the bi-level program is opaque.
Given fairly practical performance of \method{BrenierIR}, further investigation from the optimization perspective is an interesting open direction.

\section*{ACKNOWLEDGMENT}
HB is supported by JST PRESTO JPMJPR24K6.
YW acknowledges support from NSF CISE CRII 2451714.
The projected was initiated during YW's research visit to ISM.

\bibliographystyle{plainnat}
\bibliography{reference}

\clearpage
\appendix
\onecolumn

\section{PROOFS}
\label{section:proofs}

\subsection{Proof of Theorem~\ref{theorem:1d_monotonicity}}
\label{section:proof:1d_monotonicity}
Before proceeding with the proof, we show the following claim:
Under the assumption of $z_1<z_2<\dots<z_n$, if a permutation matrix $\Pbf^{\sigma^*}$ with a permutation $\sigma^*\in\mathrm{Perm}(n)$ is optimal to the Kantorovich problem $\Pcal(\ubf)=\argmin\setcomp{\inpr{\Cbf}{\Pbf}}{\Pbf\in\Bcal(n,n)}$ with the cost $C_{ij}=(z_i-u_j)^2$,
then $\sigma^*$ must be \emph{order-preserving}, that is,
$u_{\sigma^*(1)}\le u_{\sigma^*(2)}\le \dots \le u_{\sigma^*(n)}$.
This can be seen based on the following inequality:
\[ 
  \text{For any $i < k$ and $j < l$ such that $u_j \le u_l$,} \quad
  (C_{il} + C_{kj}) - (C_{ij} + C_{kl}) = 2(z_k-z_i)(u_l-u_j) \ge 0.
\]
If a permutation $\sigma\in\mathrm{Perm}(n)$ has a ``crossing'' quadruple $(i,j,k,l)$ such that $j=\sigma(i)>\sigma(k)=l$ but $i<k$, the ``uncrossed'' permutation
\[
  \sigma'(i_0) = \begin{cases}
    \sigma(i) & \text{if $i_0=k$,} \\
    \sigma(k) & \text{if $i_0=i$,} \\
    \sigma(i_0) & \text{otherwise}
  \end{cases}
\]
yields a lower (or nonincreasing) transport cost than that of $\sigma$.
Therefore, any permutation can be made order-preserving by uncrossing all crossing quadruples,
and the Kantorovich problem is optimized at an order-preserving permutation.

\paragraph*{The forward inclusion ($\subset$).}
Suppose that $\hat\ybf\in\Rbb^n$ can be written as $\hat\ybf=n\Pbf\ubf$ for some $\ubf\in\Rbb^n$ and $\Pbf\in\Pcal(\ubf)$.
By \cref{proposition:permutation}, there exists $M\ge 1$ and $\set{\sigma_1^*,\dots,\sigma_M^*}\subset\mathrm{Perm}(n)$ such that
$\Pbf$ can be represented as a convex combination of permutation matrices $\set{\Pbf^{\sigma_i^*}}_{i\in[M]}$,
where each $\Pbf^{\sigma_i^*}$ is optimal to $\Pcal(\ubf)$.
Let us write this convex combination by
\[
  \Pbf = \sum_{i\in[M]}\lambda_i\Pbf^{\sigma_i^*}
\]
for some $(\lambda_i)_{i\in[M]}\in[0,1]^M$ such that $\lambda_1+\dots+\lambda_M=1$.
Now we have
\[
  \hat y_j = n[\Pbf\ubf]_j
  = n\sum_{i\in[M]}\lambda_i\cdot\frac1n\cdot u_{\sigma_i^*(j)}
  = \sum_{i\in[M]}\lambda_iu_{\sigma_i^*(j)}
  \quad \text{for any $j\in[n]$},
\]
and for $j < j'$,
\[
  \hat y_j = \sum_{i\in[M]}\lambda_iu_{\sigma_i^*(j)}
  \le \sum_{i\in[M]}\lambda_iu_{\sigma_i^*(j')}
  = \hat y_{j'}
\]
because the optimality of each $\Pbf^{\sigma_i^*}$ indicates the order-preserving property of $\sigma_i^*$.
Thus, $\hat y_1\le \hat y_2 \le \dots \le \hat y_n$.

\paragraph*{The reverse inclusion ($\supset$).}
Fix a fixed nondecreasing sequence $\vbf\in\Rbb^n$.
We choose $\ubf=\vbf$ and $\Pbf=\frac1n\Ibf_n$, where $\Ibf_n\in\Rbb^{n\times n}$ is the identity matrix.
This $\Pbf$ corresponds to the identity permutation and is obviously order-preserving.
Hence, $\Pbf$ is optimal to the Kantorovich problem and
\[
  \hat y_j = n[\Pbf\ubf]_j = n\cdot\frac1n v_j = v_j.
\]
This proves the reverse inclusion.
\qed

\subsection{Proof of Theorem~\ref{theorem:barycentric-CM}}
\label{section:proof:barycentric-CM}
First, we recapitulate the setup.
Let $\mu=\frac1n\sum_{i=1}^n \delta_{\zbf_i}$ and $\nu=\frac1k\sum_{j=1}^k \delta_{\ubf_j}$, and $\Pbf^*\in\Bcal(n,k)$ be an optimal solution to the following problem:
\[
\argmin_{\Pbf\in\Bcal(n,k)} \inpr{\Cbf}{\Pbf},
\text{~~where~~} C_{ij}=\frac12\|\zbf_i-\ubf_j\|_2^2
\text{~~and~~} \Bcal(n,k)=\setcomp{\Pbf\in\Rbb_{\ge0}^{n\times k}}{\Pbf\oneds_k=\frac1n, \Pbf^\top\oneds_n = \frac1k}.
\]
Let $\fbf^*\in\Rbb^n$ and $\gbf^*\in\Rbb^k$ be the optimal potentials to the Kantorovich dual problem, then we have
\begin{equation}
  \label{eq:dual}
  \begin{cases}
    f_i^* + g_j^* \le \frac12\|\zbf_i-\ubf_j\|_2^2 \\
    P_{ij}^* > 0 \implies f_i^* + g_j^* = \frac12\|\zbf_i-\ubf_j\|_2^2
  \end{cases}
\end{equation}
for any $(i,j)\in[n]\times[k]$ due to the complementary slackness (see \citet[Section~2.5]{Peyre2019}).
The barycentric map $T_{\Pbf^*}:\set{\zbf_1,\dots,\zbf_n} \to \convhull\set{\ubf_1,\dots,\ubf_k}$ is given by
\[
  T_{\Pbf^*}(\zbf_i) = n\sum_{j=1}^kP^*_{ij}\ubf_j,
  \quad \text{for $i\in[n]$}
\]
so that $n\sum_{j=1}^kP^*_{ij}=1$ for each $i\in[n]$.

Now we show that there exists a proper convex function $\Phi:\Rbb^d\to\Rbb$ such that $T_{\Pbf^*}(\zbf_i)\in\partial\Phi(\zbf_i)$ for every $i\in[n]$.
Once we establish this fact, the graph of the barycentric map $\setcomp{(\zbf_i,T_{\Pbf^*}(\zbf_i))}{i\in[n]}$ immediately turns out to be cyclically monotone (thanks to \citet[Theorem~24.8]{Rockafellar1970}).%
\footnote{
  Note that \citet[Theorem~24.8]{Rockafellar1970} establishes the equivalence between the existence of a closed proper convex function and cyclic monotonicity of (any subset of) its subgradient.
  Precisely speaking, this requires that the subgradient subset is defined over the entire $\Rbb^d$, but the barycentric map $T_{\Pbf^*}$ for discrete OT is only defined over $\set{\zbf_1,\dots,\zbf_n}$.
  Nevertheless, this does not cause any issue because we only use the necessity of \citet[Theorem~24.8]{Rockafellar1970}, where we do not need to take care of whether $T_{\Pbf^*}$ is defined over the entire $\Rbb^d$ or not.
}
To show this, we construct the convex function as follows:
\begin{equation}
  \label{eq:Phi}
  \Phi(\zbf)\defeq\max_{1\le j\le k}\set{\inpr{\zbf}{\ubf_j}-\frac12\|\ubf_j\|_2^2+g_j^*}.
\end{equation}
First, the maximum of $\Phi(\zbf_i)$ is attained at the ``active'' $\ubf_j$, namely, any $\ubf_j$ with $P_{ij}^*>0$.
Indeed, for any $(i,j)\in[n]\times[k]$, the complementary slackness~\eqref{eq:dual} implies
\[
  f_i^* + g_j^* \le \frac12\|\zbf_i-\ubf_j\|_2^2 = \frac12\|\zbf_i\|_2^2 + \frac12\|\ubf_j\|_2^2 - \inpr{\zbf_i}{\ubf_j},
\]
which is rearranged as follows:
\[
  \inpr{\zbf_i}{\ubf_j} - \frac12\|\ubf_j\|_2^2 + g_j^* \le \frac12\|\zbf_i\|_2^2 - f_i^*.
\]
By taking the maximum over $j\in[k]$, this gives
\[
  \Phi(\zbf_i) \le \frac12\|\zbf_i\|_2^2 - f_i^*,
\]
and its equality holds when $P_{ij}^*>0$.
Thus, we have $\Phi(\zbf_i)=\inpr{\zbf_i}{\ubf_j}-\frac12\|\ubf_j\|_2^2+g_j^*$ for active $\ubf_j$, though active $\ubf_j$ for $\zbf_i$ may not be unique.
Regardless of the uniqueness, we have $\ubf_j\in\partial\Phi(\zbf_i)$, which implies
\[
  \setcomp{\ubf_j}{P_{ij}^*>0} \subset \partial\Phi(\zbf_i).
\]
Lastly, the convex combination $T_{\Pbf^*}(\zbf_i)=n\sum_jP_{ij}^*\ubf_j$ must lie in $\partial\Phi(\zbf_i)$ because $\partial\Phi(\zbf_i)$ is a closed convex set.
Therefore, $T_{\Pbf^*}(\zbf_i)\in\partial\Phi(\zbf_i)$ for every $i\in[n]$, which is the desired argument.
\qed

\subsection{Proof of Theorem~\ref{theorem:laguerre-CM}}
\label{section:proof:laguerre-CM}
First, we recapitulate the setup.
Let $\mu$ be an absolutely continuous probability measure on $\Rbb^d$ and $\nu=\sum_{j=1}^kb_j\delta_{\ubf_j}$ for $\bbf\in\triangle^{k-1}$,
and $\gbf^*\in\Rbb^k$ be the optimal potential to the semi-discrete OT problem:
\[
  \sup_{\gbf\in\Rbb^k}\setcomp{\int_{\Rbb^d}g^c\rd\mu + \sum_{j=1}^kb_jg_j}{g^c(\zbf)+g_j \le \|\zbf-\ubf_j\|_2^2 \text{~~for any $\zbf\in\Rbb^d$ and $j\in[k]$}},
\]
where $g^c$ is the $c$-transform of $g:\Rbb^d\to\Rbb\cup\set{\infty}$ associated with the squared $L_2$ cost:
\[
  g^c(\zbf) \defeq \min_{1\le j\le k}\set{\|\zbf-\ubf_j\|_2^2-g_j}
  \quad \text{for $\zbf\in\Rbb^d$.}
\]
The Laguerre map $T_{\gbf^*}:\Rbb^d\to\Rbb^d$ is given by 
\[
  \begin{aligned}
    T_{\gbf^*}(\zbf) &\in \setcomp{\ubf_j}{\|\zbf-\ubf_j\|_2^2-g_j^* \le \|\zbf-\ubf_{j'}\|_2^2-g_{j'}^* \text{~~for $j'\ne j$}} \\
    &= \setcomp{\ubf_j}{\inpr{\zbf}{\ubf_j}-\frac12\|\ubf_j\|_2^2 + \frac12g_j^* \ge \inpr{\zbf}{\ubf_{j'}} - \frac12\|\ubf_{j'}\|_2^2 + \frac12g_{j'}^* \text{~~for $j'\ne j$}},
  \end{aligned}
\]
where the tie can broken arbitrarily as long as $T_{\gbf^*}$ is measurable.

Construct a closed proper convex function
\[
  \Phi(\zbf) \defeq \max_{1\le j\le k}\set{\inpr{\zbf}{\ubf_j} - \frac12\|\ubf_j\|_2^2 + g_j^*}.
\]
By following the same strategy as the proof of \cref{theorem:barycentric-CM}, we can show that $T_{\gbf^*}(\zbf)\in\partial\Phi(\zbf)$,
which implies that $T_{\gbf^*}$ has a cyclically monotone graph.
\qed

\section{MONOTONICITY NOTIONS}
\label{section:monotonicity}
In this section, we briefly discuss a different notion of monotonicity, called the \emph{intra-order preserving} (IOP) property.
This property is recently used by \citet{Rahimi2020NeurIPS} for recalibration.
Specifically, they learn a neural network with the IOP property for recalibration
because it is expected to be a reasonable property as a recalibrator but does not loses the expressivity too much.
Here, we discuss that cyclic monotonicity induces a slightly weaker notion of the IOP property.

\begin{definition}
A function $f:\Rbb^d\to\Rbb^d$ is said to be \emph{weakly intra-order preserving}
if for every $\xbf\in\Rbb^d$ and every pair $(i,j)\in[d]^2$, we have
$x_i \le x_j \implies f_i(\xbf)\le f_j(\xbf)$.
Equivalently, $f$ is weakly intra-order preserving if we have
$(x_i-x_j)(f_i(\xbf)-f_j(\xbf)) \ge 0$ for any $\xbf\in\Rbb^d$ and $(i,j)\in[d]^2$.
\end{definition}

\begin{definition}
A function $f:\Rbb^d\to\Rbb^d$ is said to be \emph{permutation equivariant} if we have
$f(\Pbf\xbf)=\Pbf f(\xbf)$ for every permutation matrix $\Pbf$ and every $\xbf\in\Rbb^d$.
\end{definition}

\begin{proposition}
\label{prop:weak-IOP}
Let $f:\Rbb^d\to\Rbb^d$ be permutation equivariant and cyclically monotone.
Then $f$ is weakly intra-order preserving.
In particular, if $x_i=x_j$ then $f_i(\xbf)=f_j(\xbf)$.
\end{proposition}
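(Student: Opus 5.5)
Apply cyclic monotonicity only to the simplest nontrivial family: the two points $\xbf$ and its image under the transposition of coordinates $i$ and $j$. Fix $\xbf\in\Rbb^d$ and $(i,j)\in[d]^2$, and let $\Pbf$ be the permutation matrix swapping coordinates $i$ and $j$. Set $\xbf'\defeq\Pbf\xbf$. Cyclic monotonicity with respect to $c_{\inpr{\cdot}{\cdot}}$ for $m=2$ is ordinary monotonicity:
\[
  \inpr{\xbf-\xbf'}{f(\xbf)-f(\xbf')}\ge 0 .
\]
This follows from \cref{definition:cm_set} with the family $\set{(\xbf,f(\xbf)),(\xbf',f(\xbf'))}$: rearranging $-\inpr{\xbf}{f(\xbf)}-\inpr{\xbf'}{f(\xbf')}\le-\inpr{\xbf}{f(\xbf')}-\inpr{\xbf'}{f(\xbf)}$ gives the display.

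Next, permutation equivariance gives $f(\xbf')=\Pbf f(\xbf)$. The vector $\xbf-\xbf'$ is supported on coordinates $i$ and $j$, with entries $x_i-x_j$ and $x_j-x_i$. Likewise $f(\xbf)-\Pbf f(\xbf)$ has entries $f_i(\xbf)-f_j(\xbf)$ and $f_j(\xbf)-f_i(\xbf)$ there, and zeros elsewhere. Hence the inner product equals
\[
  2(x_i-x_j)\bigl(f_i(\xbf)-f_j(\xbf)\bigr)\ge 0,
\]
which is exactly the second (equivalent) form of weak intra-order preservation.

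For the ``in particular'' claim, the inequality above gives nothing when $x_i=x_j$, so use equivariance directly. In that case $\Pbf\xbf=\xbf$, so $f(\xbf)=f(\Pbf\xbf)=\Pbf f(\xbf)$, which forces $f_i(\xbf)=f_j(\xbf)$.

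The only step needing care is getting the sign convention right when specializing \cref{definition:cm_set} with the cost $c_{\inpr{\cdot}{\cdot}}$ to $m=2$. Everything else is a one-line computation.
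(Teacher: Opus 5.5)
Your proposal is correct and follows the paper's proof: the transposition $\Pbf$ swapping $i$ and $j$, permutation equivariance to write $f(\Pbf\xbf)=\Pbf f(\xbf)$, the inner-product identity $2(x_i-x_j)(f_i(\xbf)-f_j(\xbf))\ge 0$, and the fixed-point argument $f(\xbf)=\Pbf f(\xbf)$ for the case $x_i=x_j$. The one difference is that you read off two-point monotonicity directly from \cref{definition:cm_set} with $m=2$, while the paper cites \cref{proposition:cm}; your route is the more direct and self-contained of the two.
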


\begin{proof}
Since $f$ is cyclically monotone, \cref{proposition:cm} yields that
for any $\xbf,\ybf\in\Rbb^d$,
\[
  \inpr{f(\xbf)-f(\ybf)}{\xbf-\ybf}\ge 0.
\]
For a fixed $\xbf\in\Rbb^d$ and indices $i,j\in[n]$ such that $i\ne j$.
Let $\Pbf\in\Rbb^{d\times d}$ be the following permutation matrix:
\[
  P_{i_0j_0} = 
  \begin{cases}
    1 & \text{if $(i_0,j_0)\in\set{(i,j),(j,i)}\cup\setcomp{(i_0,i_0)}{i_0\ne i,j}$,} \\
    0 & \text{otherwise,}
  \end{cases}
\]
which swaps the $i$-th and $j$-th elements.
If we set $\xbf'\defeq\Pbf\xbf$, then we have
\[
  0 \le \inpr{f(\xbf)-f(\xbf')}{\xbf-\xbf'}
  = \inpr{f(\xbf)-\Pbf f(\xbf)}{\xbf-\Pbf\xbf}
  = 2(f_i(\xbf)-f_j(\xbf))(x_i-x_j),
\]
which is the weakly IOP property.
If we have $x_i=x_j$ additionally, then $\xbf'=\xbf$ and $f(\xbf)=\Pbf f(\xbf)$,
which implies $f_i(\xbf)=f_j(\xbf)$.
\end{proof}

\section{DETAILED EXPERIMENTS}
\label{section:experiment_detail}

\subsection{Full setup}
\label{section:experiment_detail:setup}

\paragraph{Datasets.}
The full list of datasets we used in the experiments is shown in \cref{table:dataset}.
All datasets are available on OpenML~\citep{OpenML2025}.
For each dataset, we created a random train-test split with the ratio $8$ to $2$ first.
For calibration experiments (\cref{section:calibration}), we applied $3$-fold stratified cross validation further to the train split, split into train and calibration splits.
The base models were trained with the train splits,
while the recalibrators were trained with calibration splits and their hyperparameters (if any) were chosen based on the train splits with the validation log loss.
Then, each recalibrator trained with an individual calibration split were averaged upon recalibration, forming model averaging.
This practice is common in literature~\citep{Kull2019NeurIPS}.

\begin{table}
  \centering
  \caption{Dataset details.}
  \label{table:dataset}
  \begin{tabular}{llll}
    \toprule
    Dataset & Sample size & \# of features & \# of classes \\
    \midrule
    balance-scale & 625 & 4 & 3 \\
    car & 1728 & 6 & 4 \\
    cleveland & 297 & 13 & 5 \\
    dermatology & 358 & 34 & 6 \\
    glass & 214 & 9 & 6 \\
    segment & 2310 & 19 & 7 \\
    vehicle & 846 & 18 & 4 \\
    yeast & 1484 & 8 & 10 \\
    \bottomrule
  \end{tabular}
\end{table}

\paragraph{Recalibrators.}
We describe the implementation of each baseline recalibrator (used in \cref{section:calibration}).
Except for \method{Bin} and \method{IR}, all the other baselines are multiclass recalibrators.
\begin{itemize}
  \item \method{Bin} (OvR binning) \citep{Zadrozny2001ICML}:
  We used the implementation by \citet{Kull2019NeurIPS}.%
  \footnote{
    \url{https://github.com/dirichletcal/experiments_neurips}
  }
  The number of bins of the base OvR binning estimator was fixed to $15$,
  and each bin was created uniformly over $[0,1]$.
  Then each base estimator was aggregated in the OvR fashion, turned into a multiclass recalibrator.

  \item \method{IR} (OvR isotonic regression) \citep{Zadrozny2002KDD}:
  We used \texttt{scikit-learn} implementation of binary isotonic regression~\citep{scikit-learn}.
  Each binary isotonic regressor was aggregated in the OvR fashion, turned into a multiclass recalibrator.

  \item \method{MS} (matrix scaling) \citep{Guo2017ICML}:
  Let $\zbf_i\in\Rbb^d$ be a vector of multinomial logits.
  Then matrix scaling estimates the confidence of class $i$ by $\hat p_i=\max_{c\in[d]}\mathrm{softmax}_c(\Wbf\zbf_i+\bbf)$,
  where $\mathrm{softmax}_c$ is the $c$-th coordinate of the softmax output, and $\Wbf\in\Rbb^{d\times d}$ and $\bbf\in\Rbb^d$ are optimized with respect to the NLL.
  Later, \citet{Kull2019NeurIPS} proposed to add the $L_2$ regularization to make $\Wbf$ stay close to the identity matrix $\Ibf$.
  We used the implementation by \citet{Kull2019NeurIPS}, which optimizes the NLL by L-BFGS.
  The regularization hyperparameter was chosen from $\set{10^2, 10^0, 10^{-2}, 10^{-4}}$ by the aforementioned cross-validation on the calibration splits.

  \item \method{TS} (temperature scaling) \citep{Guo2017ICML}:
  It can be seen as a more restricted version of \method{MS}, such as $\hat p_i = \max_{c\in[d]}\mathrm{softmax}_c(\zbf_i/\tau)$,
  where $\tau>0$ is the temperature parameter optimized with respect to the NLL.
  We used the implementation by \citet{Kull2019NeurIPS}.

  \item \method{Dir} (Dirichlet calibration) \citep{Kull2019NeurIPS}:
  It models the calibration map $\eta(\qbf)=[\Pr(Y=i\mid \hat\pbf(X)=\qbf)]_{i\in[d]}$ by the Dirichlet distribution.
  The linear parametrization of Dirichlet recalibrator is $\eta(\qbf)=\mathrm{softmax}(\Wbf\ln\qbf+\bbf)$, where $\qbf\in\triangle^{d-1}$ is the input of a calibration map, or can be seen as a probabilistic output of the base model.
  The parameters $\Wbf\in\Rbb^{d\times d}$ and $\bbf\in\Rbb^d$ are optimized with respect to the NLL.
  This is very similar to \method{MS}, with the only difference replacing the logit input $\zbf$ with $\ln\qbf$.
  We used the implementation by \citet{Kull2019NeurIPS}, which optimizes the NLL by L-BFGS.
  In addition, we used the same $L_2$ regularization as \method{MS}, with the regularization hyperparameter chosen from $\set{10^2, 10^0, 10^{-2}, 10^{-4}}$.

  \item \method{OI} (order-invariant network) \citep{Rahimi2020NeurIPS}:
  To boost the expressivity of recalibrators with minimal inductive bias, order-invariant networks were proposed.
  That is, a recalibrator must be equivariant to a permutation applied to the input logit vector.
  This can be implemented by $\fbf(\zbf)=S(\zbf)^{-1}\Ubf\wbf(\Sbf(\zbf)\zbf)$, where $S(\zbf)\in\Rbb^{d\times d}$ the permutation matrix yielding the descending order of $\zbf\in\Rbb^d$, $\Ubf\in\Rbb^{d\times d}$ is an upper-triangular matrix of ones, and $\wbf:\Rbb^d\to \Rbb^d$ is subject to some constraints relevant to the relative order of $\zbf$-components (see the original paper for the detail).
  The function $\wbf$ has a small degree-of-freedoms, and the original paper proposed to model it by a two-layer multilayer perceptron.
  Based on the official implementation,\footnote{\url{https://github.com/AmirooR/IntraOrderPreservingCalibration/}} we reimplemented the whole recalibrator by optimizing the NLL with L-BFGS.
  Upon the recalibrator training, we applied the $L_2$ regularization to all perceptron parameters, and the regularization hyperparameter was chosen from $\set{10^2, 10^0, 10^{-2}, 10^{-4}}$.

  \item \method{IRP} (iterative recursive splitting) \citep{Berta2024AISTATS}:
  This is a direct extension of \method{Bin} to the probability simplex with higher dimensions.
  The basic procedure is to recursively find a point to split the probability simplex, and eventually creating bins over the simplex.
  After terminating binning, the recalibrator is defined as the mean prediction over each bin.
  Specifically, by creating grid points over the entire probability simplex, the algorithm sweeps the grid points to find out the point where the entropy criterion is maximized after the resulting simplex splitting.
  Based on the official implementation,\footnote{\url{https://github.com/eugeneberta/Calibration-ROC-IR/}} we reimplemented the algorithm by fixing the grid resolution (along each dimension) to $k_0=10$.
  Although this is not satisfactory, the entire number of grid points becomes $O(k_0^d)$ with $d$ classes, which is computationally intractable with too large $k_0$.
\end{itemize}

\paragraph{Single-index models.}
We describe the implementation of each baseline (multinomial) single-index models (used in \cref{section:sim}).
Note that we intentionally focus on multiclass classifiers based on multinomial GLMs among numerous candidate baselines.
All of the baselines described here model a multinomial inverse link function (more or less).

\begin{itemize}
  \item \method{Log} (multinomial logistic regression):
  We used \texttt{scikit-learn}~\citep{scikit-learn} to implement with the default hyperparameters.
  The loss function is optimized by L-BFGS.
  The multinomial link function is fixed to the softmax function.

  \item \method{CLS} (calibrated least squares) \citep{Agarwal2014ICML}:
  This is an alternative algorithm to fit a linear predictor to the residual and calibrate the linear predictor to the observed outcomes by a cyclically monotone link.
  The monotone link is modeled by an affine combination of bases.
  As suggested by the original paper, we used $\set{z,z^2,z^3}$ for the bases.
  Both steps can be executed analytically by solving least squares.
  We alternated these two steps for $2000$ steps, with the $L_2$ regularization for the latter step (the regularization strength was set to $10^{-6}$).

  \item \method{LT} (LegendreTron) \citep{Lam2023ICML}:
  This models the multinomial inverse link function via input-convex neural networks (ICNNs)~\citep{Amos2017ICML}, which is applied on top of a linear predictor.
  We can immediately obtain a monotone link by taking the gradient of ICNNs with respect to inputs.
  The original paper suggests ICNN with $2$ blocks, $4$ hidden units, and $4$ layers.
  We used the official implementation,\footnote{\url{https://github.com/khflam/legendretron/}} without changing the default hyperparameter setups, and trained for $240$ epochs.
\end{itemize}

\begin{figure}[t]
  \centering
  \includegraphics[width=\columnwidth]{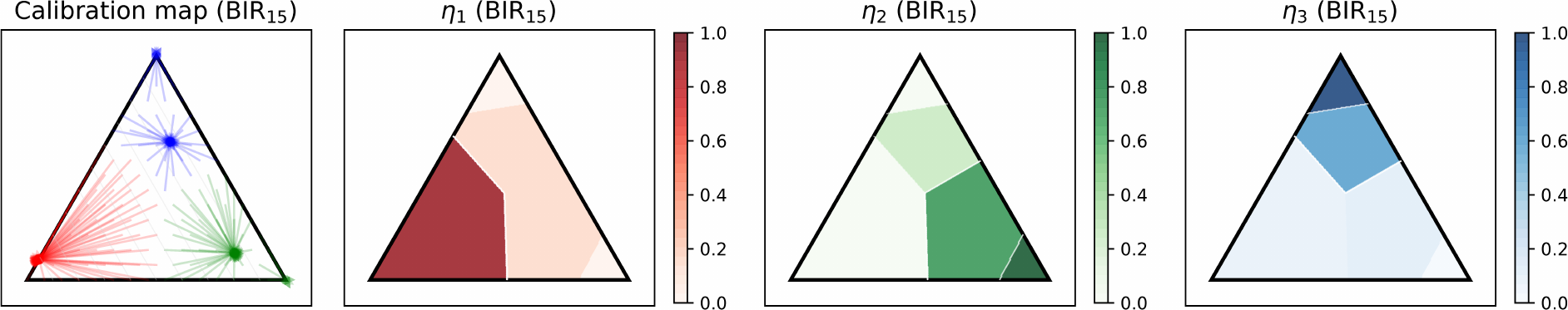} \\
  \includegraphics[width=\columnwidth]{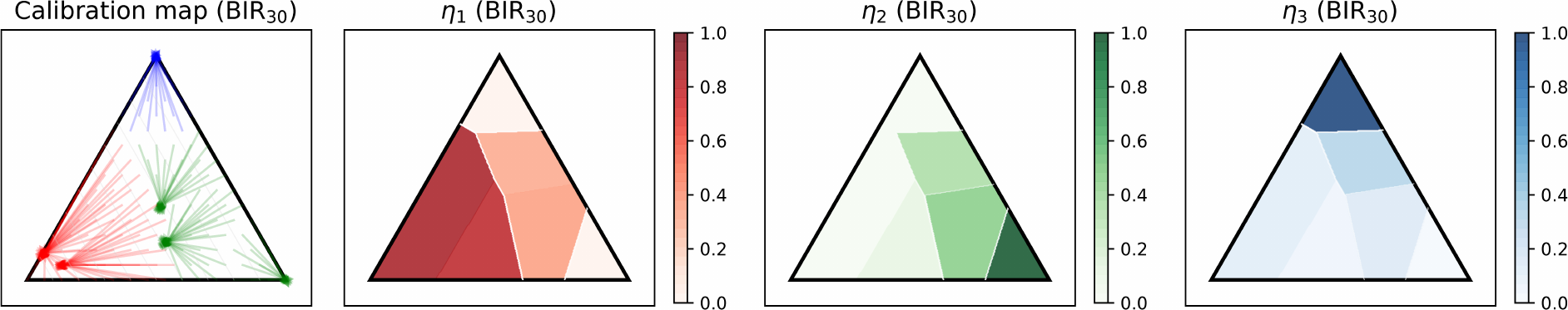} \\
  \includegraphics[width=\columnwidth]{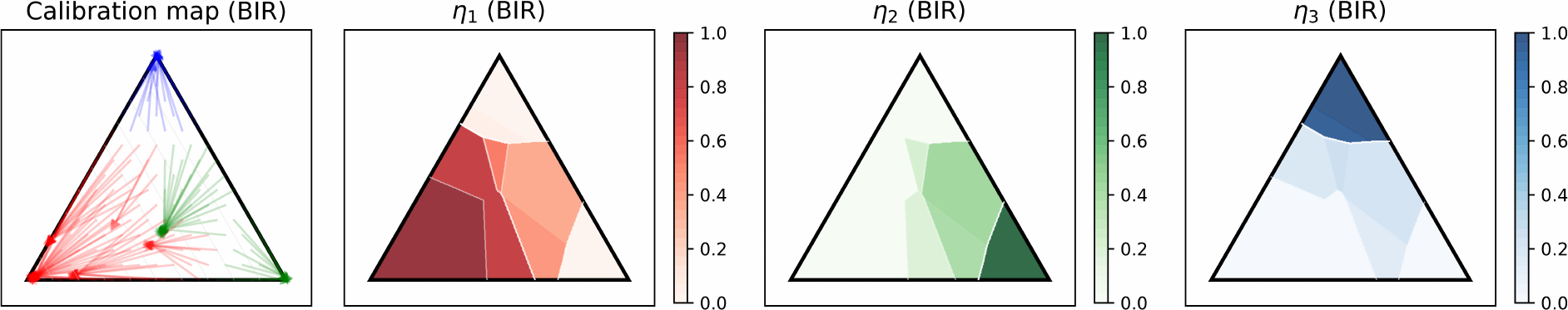} \\
  \includegraphics[width=\columnwidth]{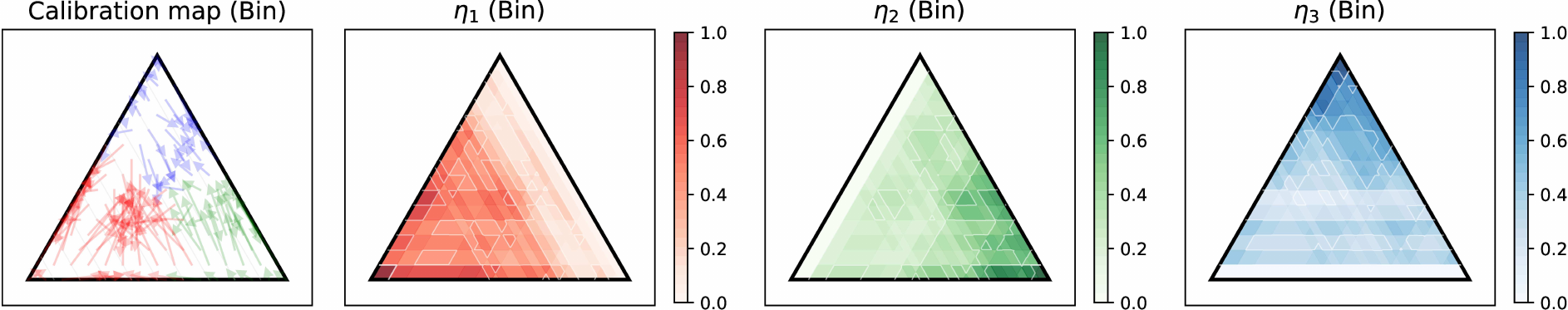} \\
  \includegraphics[width=\columnwidth]{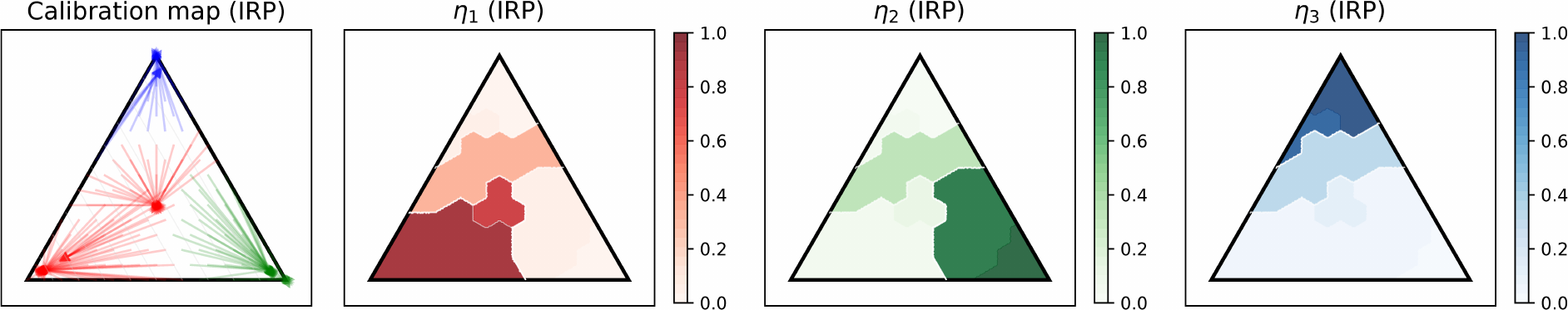} \\
  \includegraphics[width=\columnwidth]{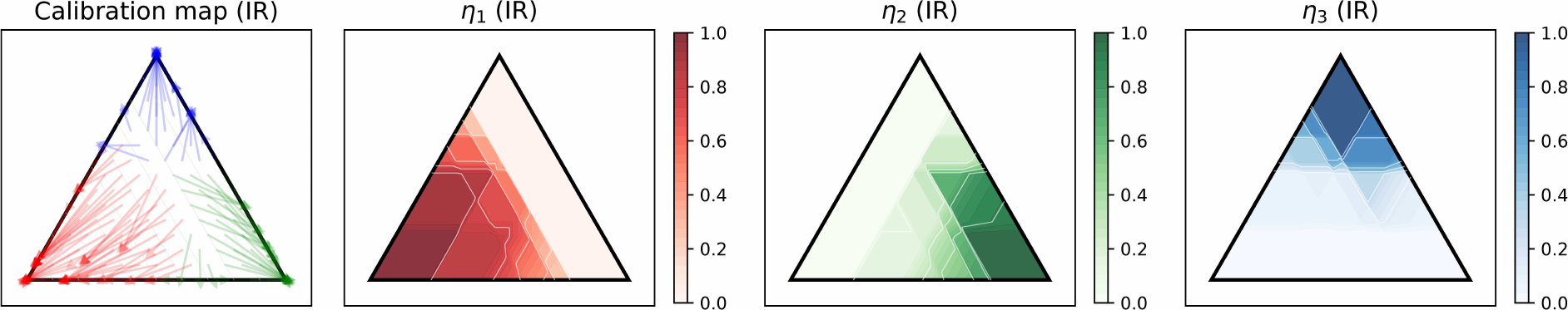}
  \caption{
    Calibration maps of nonparametric recalibrators.
    The base model is MLP and balance-scale dataset is used.
    In each row (corresponding to one recalibrator), we show the calibration map (vector field), the first/second/third coordinates of the calibration map, respectively, from left to right.
  }
  \label{figure:cm_nonparametric_mlp}
\end{figure}

\begin{figure}[t]
  \centering
  \includegraphics[width=\columnwidth]{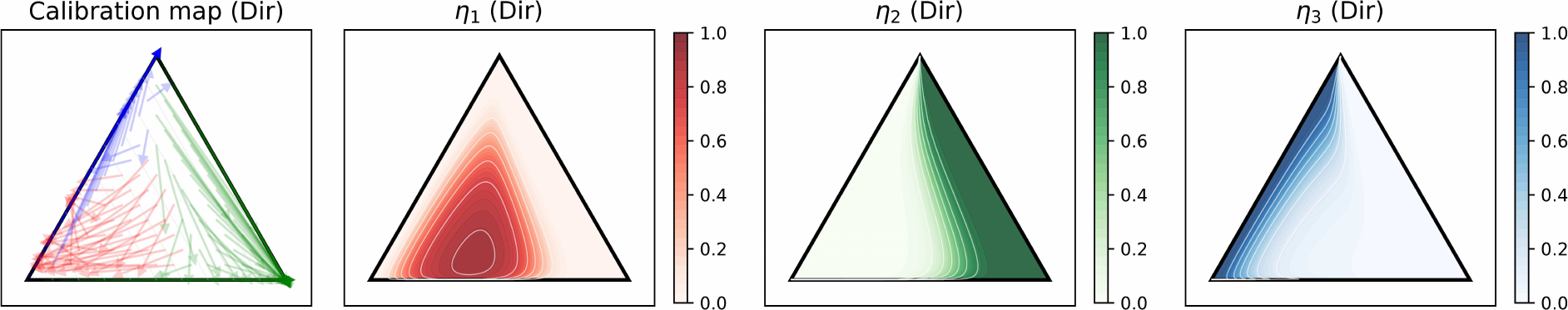} \\
  \includegraphics[width=\columnwidth]{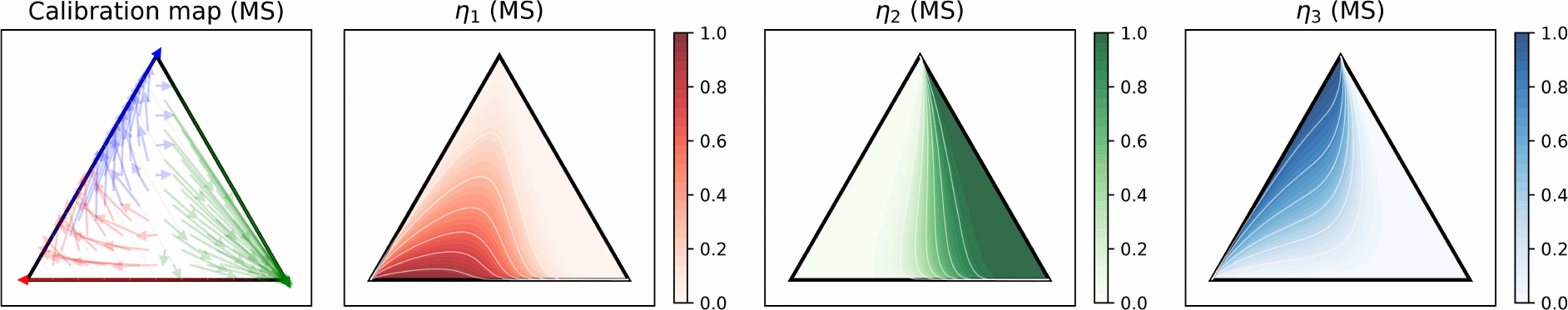} \\
  \includegraphics[width=\columnwidth]{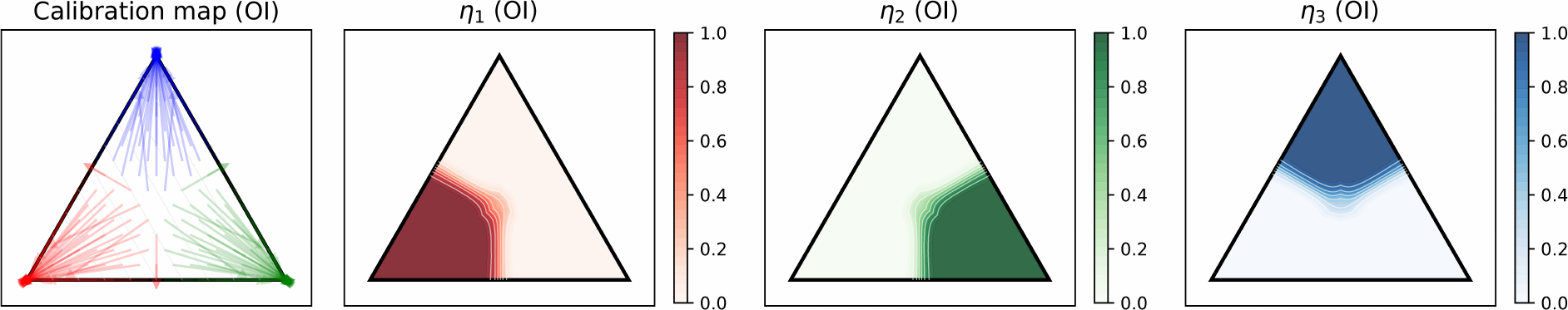} \\
  \includegraphics[width=\columnwidth]{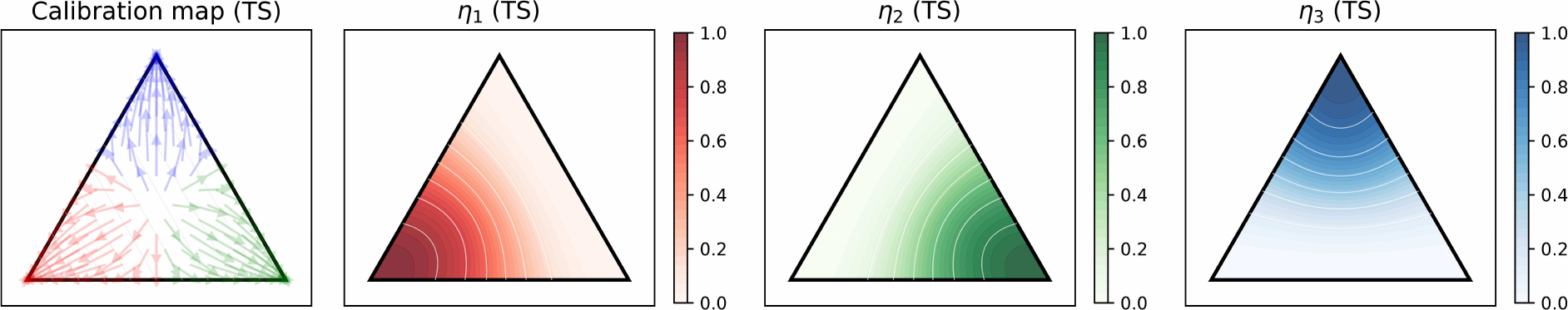}
  \caption{
    Calibration maps of parametric recalibrators.
    The base model is MLP and balance-scale dataset is used.
    In each row (corresponding to one recalibrator), we show the calibration map (vector field), the first/second/third coordinates of the calibration map, respectively, from left to right.
  }
  \label{figure:cm_parametric_mlp}
\end{figure}

\begin{figure}[t]
  \centering
  \includegraphics[width=\columnwidth]{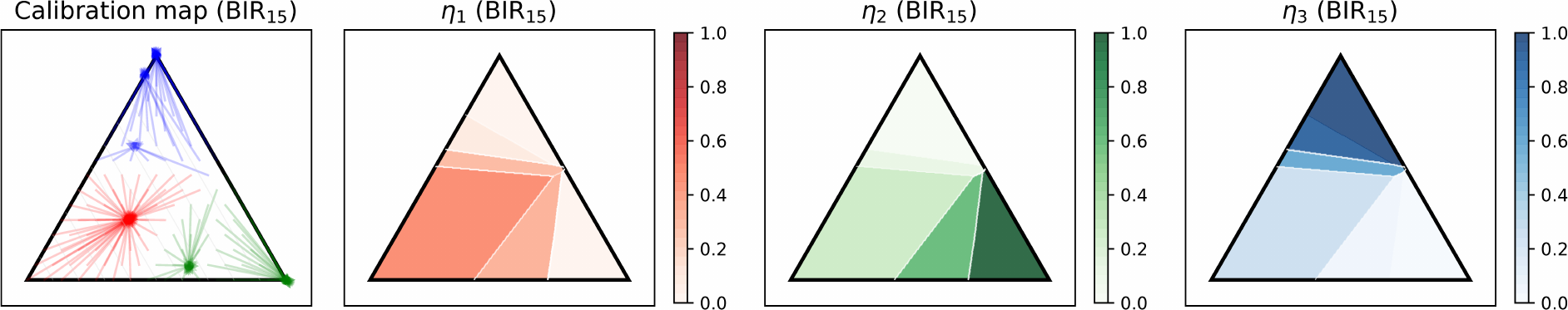} \\
  \includegraphics[width=\columnwidth]{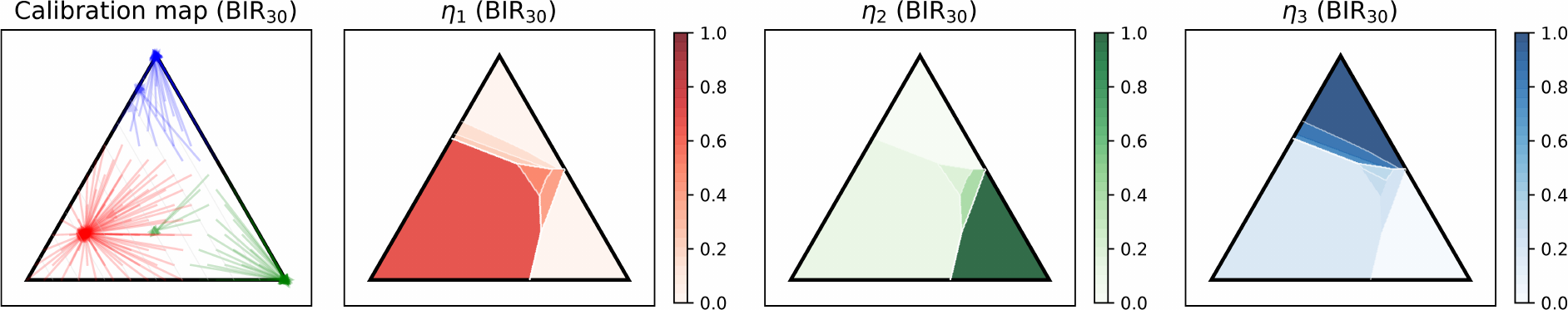} \\
  \includegraphics[width=\columnwidth]{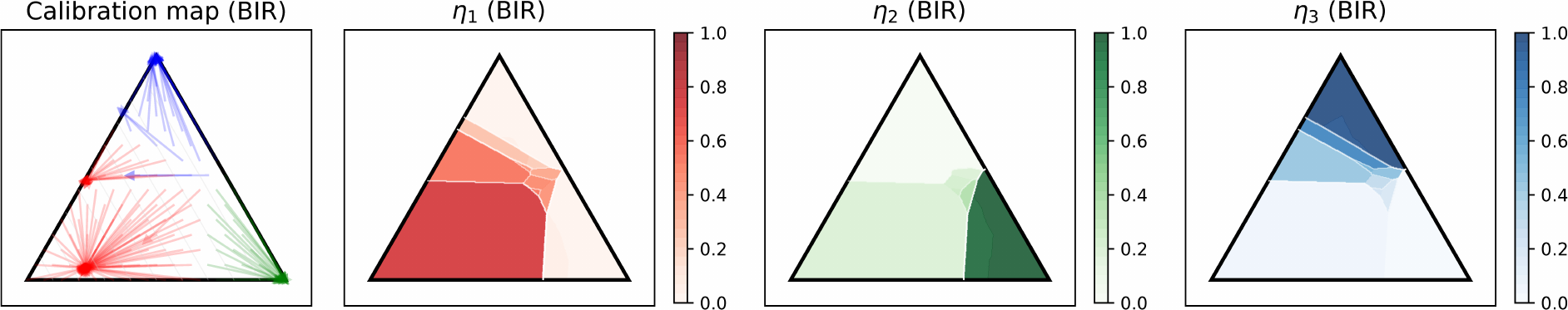} \\
  \includegraphics[width=\columnwidth]{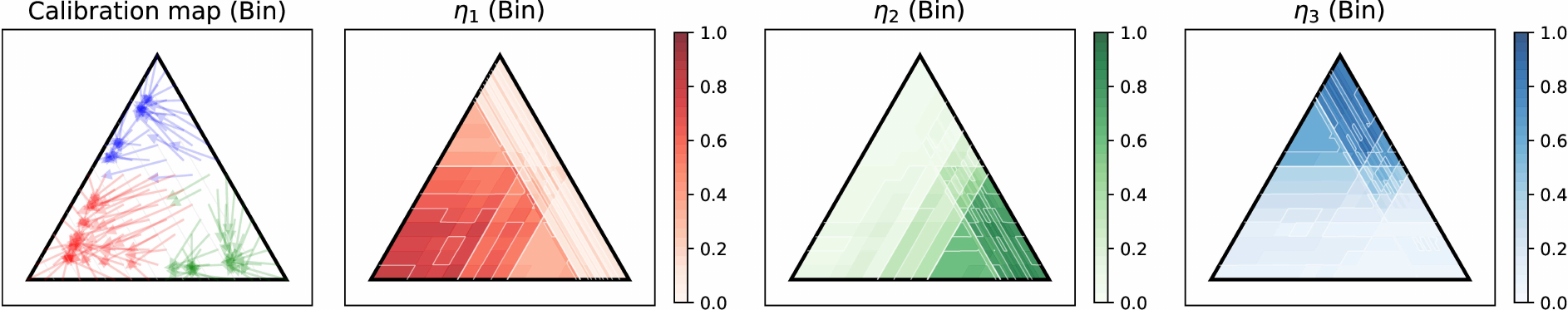} \\
  \includegraphics[width=\columnwidth]{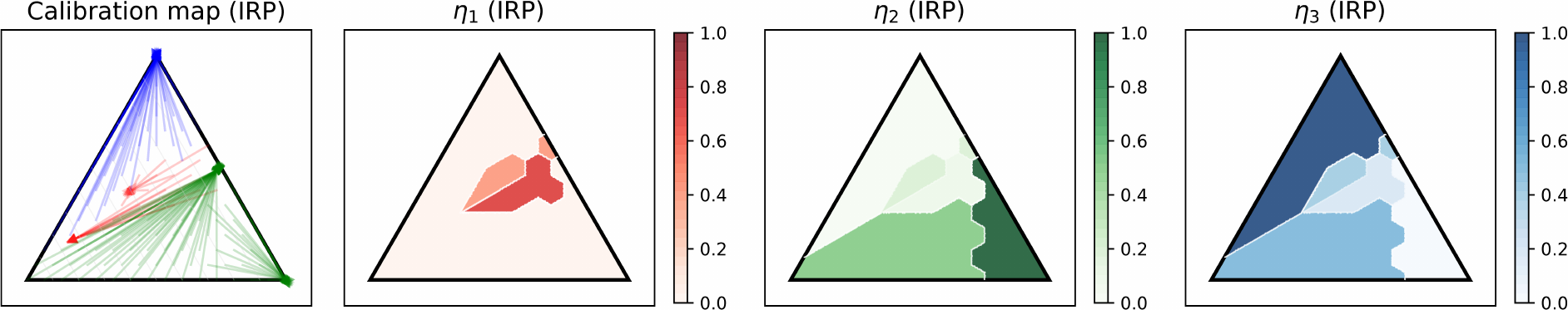} \\
  \includegraphics[width=\columnwidth]{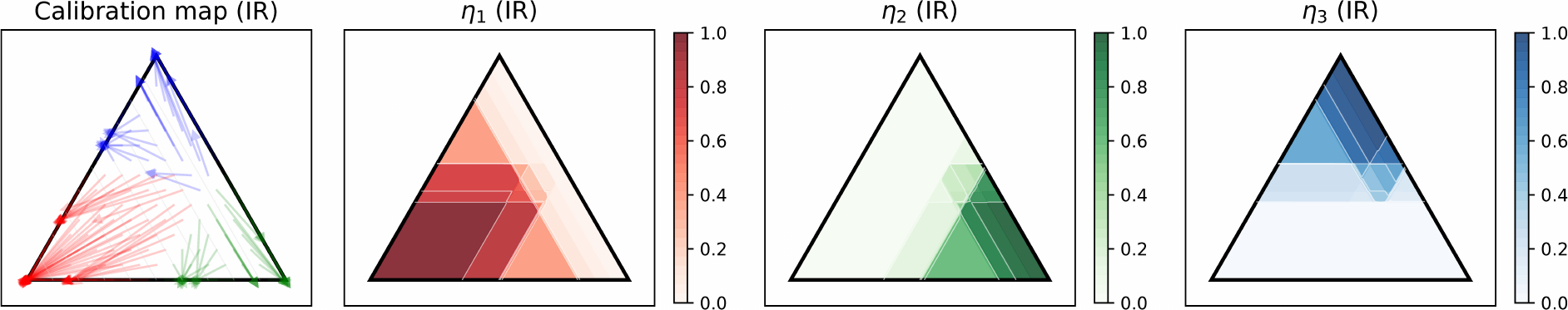}
  \caption{
    Calibration maps of nonparametric recalibrators.
    The base model is naive Bayes and balance-scale dataset is used.
    In each row (corresponding to one recalibrator), we show the calibration map (vector field), the first/second/third coordinates of the calibration map, respectively, from left to right.
  }
  \label{figure:cm_nonparametric_nbayes}
\end{figure}

\begin{figure}[t]
  \centering
  \includegraphics[width=\columnwidth]{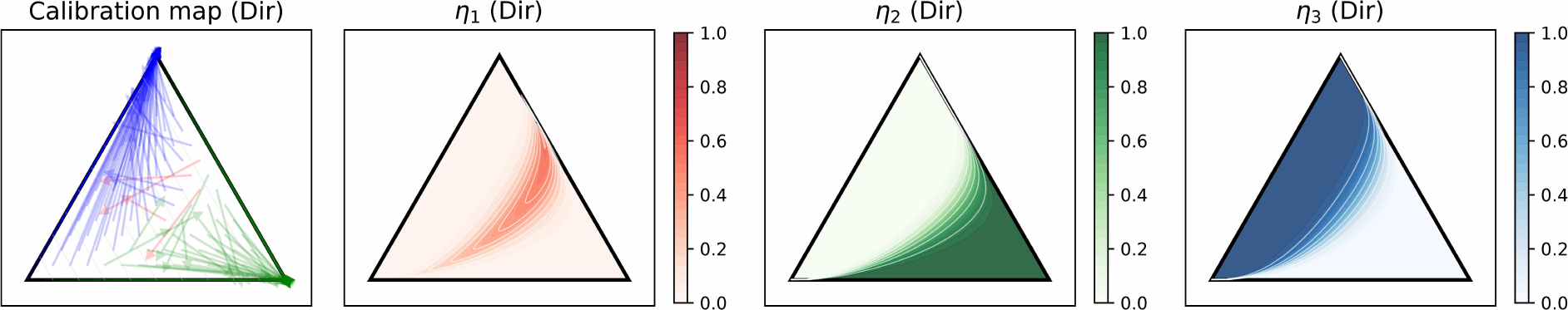} \\
  \includegraphics[width=\columnwidth]{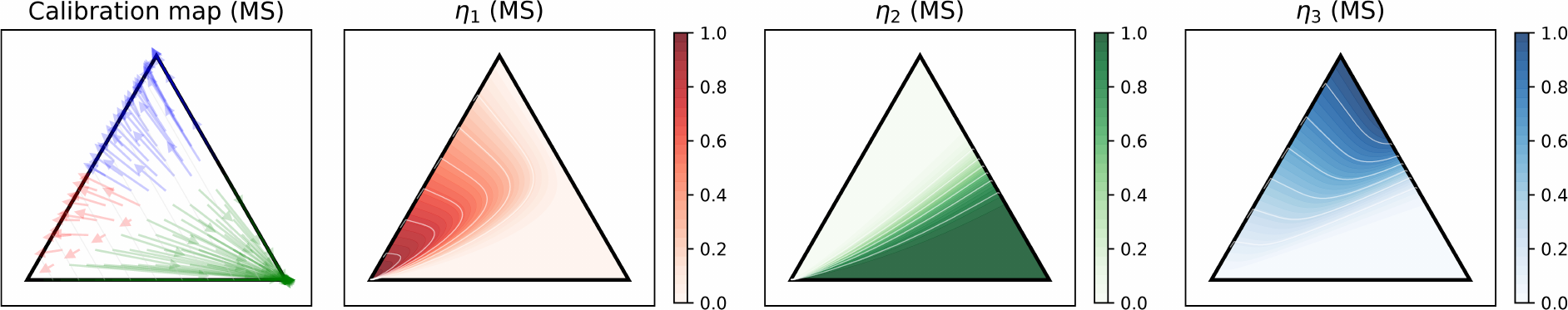} \\
  \includegraphics[width=\columnwidth]{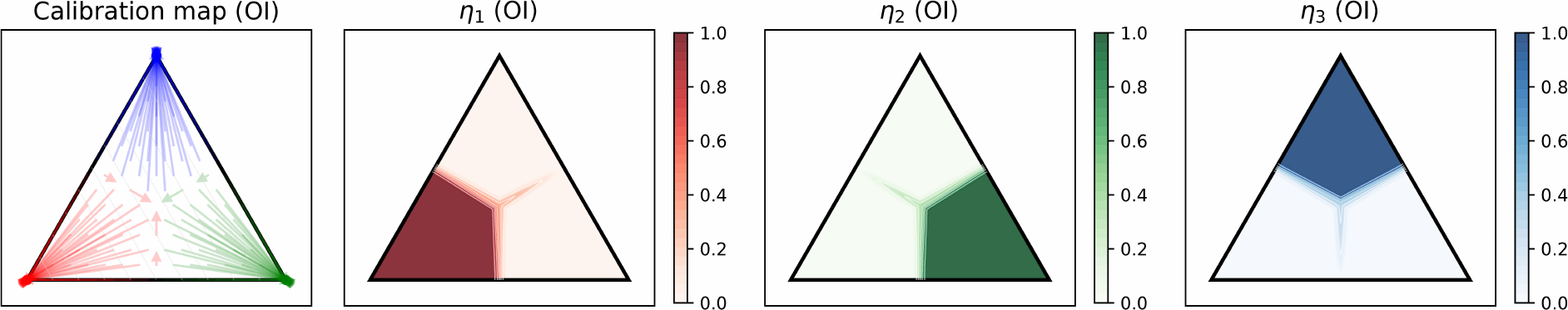} \\
  \includegraphics[width=\columnwidth]{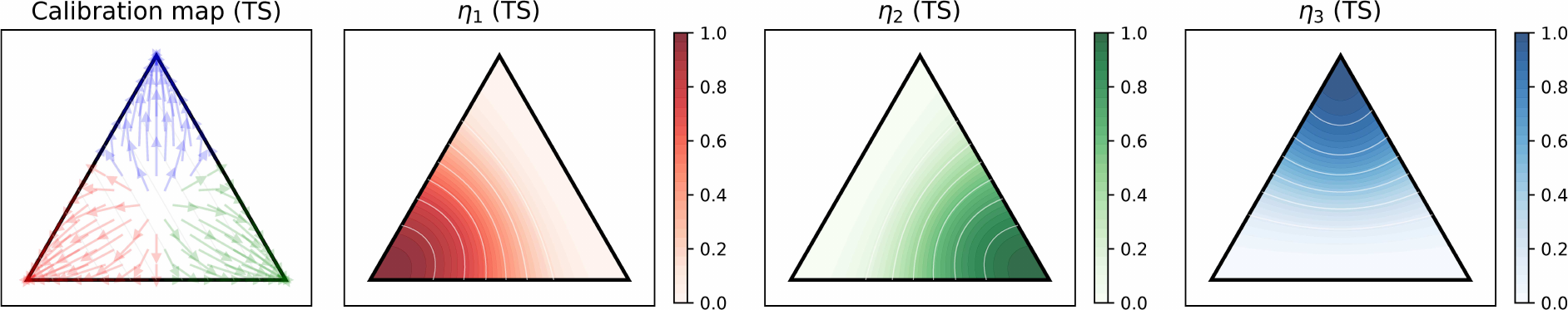}
  \caption{
    Calibration maps of parametric recalibrators.
    The base model is naive Bayes and balance-scale dataset is used.
    In each row (corresponding to one recalibrator), we show the calibration map (vector field), the first/second/third coordinates of the calibration map, respectively, from left to right.
  }
  \label{figure:cm_parametric_nbayes}
\end{figure}

\subsection{Computational time}
\label{section:experiment_detail:time}

We additionally measure the computational time to compare \method{BIR} and \method{IRP}, the two most promising approaches in probability calibration.
We report the computational time of fitting recalibrators and test prediction, averaged over $10$ trials.
As seen in \cref{table:time}, the running time of \method{BIR} becomes more advantageous with a mildly large number of classes (${\sim}10$).

\begin{table}[t]
  \centering
  \caption{Averaged running times (in seconds) over $10$ trials.}
  \label{table:time}
  \begin{tabular}{lrr rrrr r}
    \toprule
    \multirow{2}{*}{Dataset} & \multirow{2}{*}{\# of classes} & \multirow{2}{*}{Sample size} & \multicolumn{3}{c}{\method{BrenierIR} \tiny{(Ours)}} & \multirow{2}{*}{\method{IRP}} \\
    \cmidrule(lr){4-6}
    & & & $k=15$ & $k=30$ & $k=50$ & \\
    \midrule
    balance-scale & 3  & 625  & 3.60  & 13.69  & 41.19  & 0.29 \\
    car           & 4  & 1728 & 16.84 & 50.91  & 221.91 & 3.20 \\
    cleveland     & 5  & 297  & 4.68  & 16.52  & 35.05  & 4.26 \\
    dermatology   & 6  & 358  & 7.17  & 25.82  & 57.81  & 48.52 \\
    glass         & 6  & 214  & 4.51  & 18.26  & 55.76  & 7.00 \\
    segment       & 7  & 2310 & 39.02 & 119.03 & 279.27 & 124.82 \\
    vehicle       & 4  & 846  & 7.18  & 18.39  & 49.37  & 1.48 \\
    yeast         & 10 & 1484 & 27.72 & 104.01 & 256.01 & 3243.40 \\
    \bottomrule
  \end{tabular}
\end{table}

\subsection{More illustration of calibration maps}
\label{section:experiment_detail:calibration_map}
We show more illustrations of the calibration maps, complementing \cref{figure:cm} in \cref{section:calibration}.
In \cref{figure:cm_nonparametric_mlp}, we show the calibration maps of nonparametric recalibrators:
\method{BIR} (Brenier isotonic regression), \method{Bin} (OvR binning), \method{IRP} (iterative recursive splitting), and \method{IR} (OvR isotonic regression).
The OvR approaches \method{Bin} and \method{IR} do not learn inter-class information, and hence the resulting calibration maps exhibit contours parallel to the simplex boundary.
Moreover, these two methods do not significantly perform binning so that the counter maps have considerably many bins.
This is because the number of bins $k$ grows exponentially in the number of classes $d$, such as $k=O(k_0^d)$, suppose $k_0$ is the number of bins of a single base recalibration model of the whole OvR model.
This can cause overfitting, leading to inferior calibration performance, as seen in \cref{table:calib}.
To the contrary, \method{BIR} and \method{IRP} produce coarser and adaptive binning results.
One notable difference is that \method{IRP} is restricted to the simplex splitting perpendicular to the simplex boundary.
Indeed, each boundary of the calibration map levels is perpendicular to the simplex boundary.%
\footnote{
  Although the theoretical binning boundaries given by \method{IRP} are completely straight and perpendicular to the simplex boundaries,
  the computed boundaries in \cref{figure:cm_nonparametric_mlp} look a bit ``jaggy'' due to the insufficient resolution when we run \method{IRP} algorithm.
  For this visualization, we used $200$ grids to run \method{IRP} to get as close boundaries to theory as possible.
}
This is attributed to their simplex splitting criterion $R_k$ in \citet[Eq.~(3)]{Berta2024AISTATS}, and can also be seen in their Figure~3.
Our \method{BIR} does not have such a restriction because the simplex splitting is based on the Laguerre cells, which is fully adaptive to the Kantorovich potential.
As we increase the number of bins $k$ for \method{BIR}, the simplex binning results become finer, but not excessively detailed unlike \method{Bin} and \method{IR}.
Moreover, \method{BIR} has outputs clearly concentrated on a few number of points over the simplex, as can bee seen in the calibration map visualization,
which nicely trades off the bias and variance in practice.

In \cref{figure:cm_parametric_mlp}, we show the calibration maps of parametric recalibrators:
\method{Dir} (Dirichlet calibration), \method{MS} (matrix scaling), \method{OI} (order-invariant network), and \method{TS} (temperature scaling).
By comparing the calibration maps, the vector fields produced by these parametric recalibrators entail the non-degenerating behaviors, unlike the nonparametric recalibrators.
In the right three contour plots, the level-set boundaries are more smooth than those of the nonparametric recalibrators, which results in worse expressivity.

Further, we tested the same setup but with a different based model, naive Bayes, instead of MLP.
The obtained calibration maps for all recalibrators are shown in \cref{figure:cm_nonparametric_nbayes} (for nonparametric recalibrators) and \cref{figure:cm_parametric_nbayes} (for parametric recalibrators).
The overall trends resemble the MLP case.

\subsection{More calibration results}
\label{section:experiment_details:benchmark}
In addition to the calibration results shown in \cref{section:calibration},
which only measures the $L_1$ calibration error,
we expand in this section the benchmark results for the other metrics.
Specifically, \cref{table:calib:accuracy} shows accuracy, \cref{table:calib:classwise-ce} shows the classwise calibration error, and \cref{table:calib:conf-ce} shows the confidence calibration error.

We observe the following points.
\begin{enumerate}
  \item In terms of the classwise/confidence/$L_1$ calibration error, \method{IRP} is quite a powerful baseline, and our \method{BrenierIR} performs on par with it.

  \item In terms of accuracy, \method{IRP} is no longer strong enough but \method{BrenierIR} achieves the reasonable standard.
\end{enumerate}

\begin{table*}[t]
\centering
\caption{
  Recalibration results for MLP \textbf{(upper table)} and linear SVM \textbf{(lower table)}.
  Each number indicates \textbf{accuracy} (higher is better) with averaging $10$ trials, and bold-faced if the recalibrator achieves the \underline{best or second best} performance or statistically indistinguishable from them by the Mann--Whitney $U$ test (significance level: $5$\%).
}
\label{table:calib:accuracy}
\fontsize{8pt}{8pt}\selectfont
\begin{tabular}{lccccccccCCC}
\toprule
\multirow{2}{*}{\diagbox[width=10.5em]{Dataset}{Recalibrator}} & \multirow{2}{*}{---} & \multirow{2}{*}{\method{Bin}} & \multirow{2}{*}{\method{Dir}} & \multirow{2}{*}{\method{IRP}} & \multirow{2}{*}{\method{IR}} & \multirow{2}{*}{\method{MS}} & \multirow{2}{*}{\method{OI}} & \multirow{2}{*}{\method{TS}} & \multicolumn{3}{c}{\method{BrenierIR} \tiny{(Ours)}} \\
\cmidrule(lr){10-12}
&&&&&&&&& $k=15$ & $k=30$ & $k=50$ \\
\midrule
balance-scale & 0.928 & \textbf{0.968} & \textbf{0.968} & 0.960 & \textbf{0.968} & 0.928 & 0.928 & 0.928 & \textbf{0.965} & 0.960 & 0.959 \\
car & \textbf{0.991} & \textbf{0.988} & \textbf{0.988} & 0.986 & 0.986 & \textbf{0.991} & \textbf{0.991} & \textbf{0.991} & 0.965 & \textbf{0.988} & 0.988 \\
cleveland & 0.525 & 0.492 & \textbf{0.607} & 0.541 & 0.557 & \textbf{0.590} & 0.525 & 0.525 & 0.549 & 0.528 & 0.536 \\
dermatology & 0.919 & 0.919 & \textbf{0.932} & 0.919 & \textbf{0.932} & 0.919 & 0.919 & 0.919 & 0.916 & \textbf{0.934} & 0.923 \\
glass & 0.698 & 0.674 & 0.698 & 0.698 & \textbf{0.791} & 0.698 & 0.698 & 0.698 & \textbf{0.744} & 0.730 & 0.702 \\
vehicle & 0.829 & 0.824 & 0.829 & 0.812 & \textbf{0.835} & 0.824 & 0.829 & 0.829 & \textbf{0.844} & 0.809 & 0.829 \\
\bottomrule
\end{tabular} \\
\vspace{3pt}
\begin{tabular}{lccccccccCCC}
\toprule
\multirow{2}{*}{\diagbox[width=10.5em]{Dataset}{Recalibrator}} & \multirow{2}{*}{---} & \multirow{2}{*}{\method{Bin}} & \multirow{2}{*}{\method{Dir}} & \multirow{2}{*}{\method{IRP}} & \multirow{2}{*}{\method{IR}} & \multirow{2}{*}{\method{MS}} & \multirow{2}{*}{\method{OI}} & \multirow{2}{*}{\method{TS}} & \multicolumn{3}{c}{\method{BrenierIR} \tiny{(Ours)}} \\
\cmidrule(lr){10-12}
&&&&&&&&& $k=15$ & $k=30$ & $k=50$ \\
\midrule
balance-scale & 0.872 & \textbf{0.880} & 0.736 & 0.456 & 0.872 & 0.736 & 0.736 & 0.736 & \textbf{0.928} & \textbf{0.928} & \textbf{0.928} \\
car & 0.882 & \textbf{0.893} & 0.737 & 0.882 & \textbf{0.908} & 0.754 & 0.260 & 0.260 & 0.860 & 0.881 & 0.879 \\
cleveland & \textbf{0.607} & 0.541 & 0.574 & 0.590 & 0.557 & 0.590 & 0.557 & 0.557 & 0.595 & \textbf{0.597} & 0.582 \\
dermatology & \textbf{0.959} & \textbf{0.959} & 0.568 & 0.838 & \textbf{0.959} & 0.568 & 0.311 & 0.311 & 0.915 & 0.934 & \textbf{0.950} \\
glass & 0.605 & 0.581 & 0.419 & 0.349 & \textbf{0.651} & 0.395 & 0.326 & 0.326 & \textbf{0.577} & \textbf{0.658} & \textbf{0.644} \\
vehicle & \textbf{0.765} & \textbf{0.759} & 0.576 & 0.259 & \textbf{0.765} & 0.588 & 0.412 & 0.412 & 0.749 & 0.733 & 0.747 \\
\bottomrule
\end{tabular}
\end{table*}

\begin{table*}[t]
\centering
\caption{
  Recalibration results for MLP \textbf{(upper table)} and linear SVM \textbf{(lower table)}.
  Each number indicates the \textbf{classwise calibration error} (lower is better) with averaging $10$ trials, and bold-faced if the recalibrator achieves the \underline{best or second best} performance or statistically indistinguishable from them by the Mann--Whitney $U$ test (significance level: $5$\%).
}
\label{table:calib:classwise-ce}
\fontsize{8pt}{8pt}\selectfont
\begin{tabular}{lccccccccCCC}
\toprule
\multirow{2}{*}{\diagbox[width=10.5em]{Dataset}{Recalibrator}} & \multirow{2}{*}{---} & \multirow{2}{*}{\method{Bin}} & \multirow{2}{*}{\method{Dir}} & \multirow{2}{*}{\method{IRP}} & \multirow{2}{*}{\method{IR}} & \multirow{2}{*}{\method{MS}} & \multirow{2}{*}{\method{OI}} & \multirow{2}{*}{\method{TS}} & \multicolumn{3}{c}{\method{BrenierIR} \tiny{(Ours)}} \\
\cmidrule(lr){10-12}
&&&&&&&&& $k=15$ & $k=30$ & $k=50$ \\
\midrule
balance-scale & 0.059 & 0.050 & 0.029 & \textbf{0.020} & 0.038 & 0.045 & 0.046 & 0.039 & \textbf{0.020} & \textbf{0.020} & 0.024 \\
car & 0.015 & 0.011 & \textbf{0.007} & \textbf{0.007} & 0.008 & 0.008 & 0.028 & 0.008 & 0.011 & 0.010 & 0.009 \\
cleveland & 0.114 & 0.089 & 0.081 & \textbf{0.035} & \textbf{0.063} & 0.094 & 0.184 & 0.097 & 0.072 & 0.083 & 0.087 \\
dermatology & 0.028 & 0.030 & 0.024 & 0.029 & \textbf{0.023} & 0.027 & 0.069 & 0.027 & \textbf{0.019} & 0.026 & 0.027 \\
glass & 0.097 & 0.087 & 0.089 & \textbf{0.072} & 0.082 & 0.087 & 0.132 & 0.098 & 0.081 & 0.085 & \textbf{0.071} \\
vehicle & 0.058 & 0.052 & 0.040 & \textbf{0.021} & 0.040 & 0.057 & 0.116 & 0.055 & 0.044 & \textbf{0.027} & 0.044 \\
\bottomrule
\end{tabular} \\
\vspace{3pt}
\begin{tabular}{lccccccccCCC}
\toprule
\multirow{2}{*}{\diagbox[width=10.5em]{Dataset}{Recalibrator}} & \multirow{2}{*}{---} & \multirow{2}{*}{\method{Bin}} & \multirow{2}{*}{\method{Dir}} & \multirow{2}{*}{\method{IRP}} & \multirow{2}{*}{\method{IR}} & \multirow{2}{*}{\method{MS}} & \multirow{2}{*}{\method{OI}} & \multirow{2}{*}{\method{TS}} & \multicolumn{3}{c}{\method{BrenierIR} \tiny{(Ours)}} \\
\cmidrule(lr){10-12}
&&&&&&&&& $k=15$ & $k=30$ & $k=50$ \\
\midrule
balance-scale & 0.032 & 0.061 & 0.094 & \textbf{0.004} & 0.089 & 0.053 & 0.179 & 0.053 & \textbf{0.029} & 0.029 & 0.034 \\
car & \textbf{0.024} & 0.065 & 0.054 & 0.139 & 0.052 & 0.052 & 0.177 & 0.133 & \textbf{0.028} & 0.041 & 0.035 \\
cleveland & \textbf{0.074} & 0.076 & 0.083 & \textbf{0.037} & 0.089 & 0.106 & 0.139 & 0.154 & 0.082 & 0.097 & 0.093 \\
dermatology & 0.040 & 0.041 & 0.029 & 0.051 & \textbf{0.013} & 0.024 & 0.063 & 0.098 & 0.021 & 0.020 & \textbf{0.020} \\
glass & 0.089 & 0.073 & 0.105 & \textbf{0.007} & \textbf{0.067} & 0.086 & 0.096 & 0.083 & 0.074 & 0.089 & 0.100 \\
vehicle & 0.074 & 0.082 & 0.104 & \textbf{0.002} & 0.066 & 0.084 & 0.098 & 0.087 & \textbf{0.061} & 0.072 & 0.070 \\
\bottomrule
\end{tabular}
\end{table*}

\begin{table*}[t]
\centering
\caption{
  Recalibration results for MLP \textbf{(upper table)} and linear SVM \textbf{(lower table)}.
  Each number indicates the \textbf{confidence calibration error} (lower is better) with averaging $10$ trials, and bold-faced if the recalibrator achieves the \underline{best or second best} performance or statistically indistinguishable from them by the Mann--Whitney $U$ test (significance level: $5$\%).
}
\label{table:calib:conf-ce}
\fontsize{8pt}{8pt}\selectfont
\begin{tabular}{lccccccccCCC}
\toprule
\multirow{2}{*}{\diagbox[width=10.5em]{Dataset}{Recalibrator}} & \multirow{2}{*}{---} & \multirow{2}{*}{\method{Bin}} & \multirow{2}{*}{\method{Dir}} & \multirow{2}{*}{\method{IRP}} & \multirow{2}{*}{\method{IR}} & \multirow{2}{*}{\method{MS}} & \multirow{2}{*}{\method{OI}} & \multirow{2}{*}{\method{TS}} & \multicolumn{3}{c}{\method{BrenierIR} \tiny{(Ours)}} \\
\cmidrule(lr){10-12}
&&&&&&&&& $k=15$ & $k=30$ & $k=50$ \\
\midrule
balance-scale & 0.062 & 0.062 & 0.023 & \textbf{0.010} & 0.032 & 0.035 & 0.049 & 0.044 & \textbf{0.012} & 0.020 & 0.023 \\
car & 0.024 & 0.016 & 0.011 & 0.013 & \textbf{0.011} & \textbf{0.011} & 0.063 & 0.015 & 0.019 & 0.017 & 0.016 \\
cleveland & 0.172 & 0.201 & 0.106 & \textbf{0.061} & 0.158 & \textbf{0.099} & 0.435 & 0.126 & 0.119 & 0.168 & 0.161 \\
dermatology & 0.058 & \textbf{0.050} & 0.066 & 0.083 & 0.058 & 0.067 & 0.418 & 0.058 & \textbf{0.024} & 0.058 & 0.057 \\
glass & 0.155 & 0.155 & 0.218 & 0.197 & 0.228 & 0.169 & 0.517 & \textbf{0.136} & 0.152 & \textbf{0.159} & \textbf{0.144} \\
vehicle & 0.064 & 0.064 & 0.063 & \textbf{0.040} & 0.057 & \textbf{0.050} & 0.214 & 0.051 & 0.078 & 0.055 & 0.062 \\
\bottomrule
\end{tabular} \\
\vspace{3pt}
\begin{tabular}{lccccccccCCC}
\toprule
\multirow{2}{*}{\diagbox[width=10.5em]{Dataset}{Recalibrator}} & \multirow{2}{*}{---} & \multirow{2}{*}{\method{Bin}} & \multirow{2}{*}{\method{Dir}} & \multirow{2}{*}{\method{IRP}} & \multirow{2}{*}{\method{IR}} & \multirow{2}{*}{\method{MS}} & \multirow{2}{*}{\method{OI}} & \multirow{2}{*}{\method{TS}} & \multicolumn{3}{c}{\method{BrenierIR} \tiny{(Ours)}} \\
\cmidrule(lr){10-12}
&&&&&&&&& $k=15$ & $k=30$ & $k=50$ \\
\midrule
balance-scale & \textbf{0.030} & 0.075 & 0.141 & \textbf{0.006} & 0.098 & 0.051 & 0.221 & 0.048 & 0.042 & 0.038 & 0.038 \\
car & \textbf{0.026} & 0.101 & 0.088 & 0.277 & 0.086 & 0.059 & 0.092 & 0.218 & \textbf{0.046} & 0.062 & 0.049 \\
cleveland & 0.148 & 0.183 & 0.209 & \textbf{0.030} & 0.158 & 0.192 & 0.300 & 0.276 & 0.153 & \textbf{0.147} & 0.160 \\
dermatology & 0.087 & 0.083 & 0.224 & 0.100 & \textbf{0.033} & 0.064 & 0.102 & 0.061 & 0.054 & 0.043 & \textbf{0.041} \\
glass & 0.182 & 0.156 & 0.102 & \textbf{0.008} & 0.150 & 0.150 & 0.120 & \textbf{0.076} & 0.217 & 0.225 & 0.220 \\
vehicle & 0.106 & 0.099 & 0.138 & \textbf{0.001} & 0.088 & 0.084 & \textbf{0.058} & 0.101 & 0.095 & 0.100 & 0.115 \\
\bottomrule
\end{tabular}
\end{table*}

\end{document}